\PassOptionsToPackage{
  colorlinks=true,
  linkcolor=blue!70!black,
  citecolor=blue!70!black,
  urlcolor=blue!70!black,
  pdfencoding=auto
}{hyperref}

\documentclass[12pt,a4paper]{article}

\usepackage[T1]{fontenc}
\usepackage[utf8]{inputenc}
\usepackage{mathptmx}
\usepackage{microtype}
\usepackage[margin=1in,headheight=15pt]{geometry}
\usepackage{amsmath,amssymb,amsthm}
\usepackage{booktabs}
\usepackage{array}
\usepackage{xcolor}
\usepackage{hyperref}
\usepackage{fancyhdr}
\usepackage{titlesec}
\usepackage{abstract}
\usepackage{enumitem}
\usepackage{natbib}
\usepackage{tcolorbox}
\usepackage{algorithm}
\usepackage{algpseudocode}
\usepackage{setspace}
\usepackage{caption}

\tcbuselibrary{skins,breakable}
\newtcolorbox{callout}[1][]{
  enhanced,breakable,
  colback=blue!4!white,colframe=blue!40!black,
  fonttitle=\bfseries,boxrule=0.4pt,
  left=6pt,right=6pt,top=4pt,bottom=4pt,#1
}

\newtheorem{theorem}{Theorem}[section]
\newtheorem{proposition}[theorem]{Proposition}
\newtheorem{corollary}[theorem]{Corollary}
\newtheorem{definition}[theorem]{Definition}
\newtheorem{lemma}[theorem]{Lemma}
\theoremstyle{remark}
\newtheorem*{remark}{Remark}

\titleformat{\section}{\large\bfseries}{\thesection}{1em}{}
\titleformat{\subsection}{\normalsize\bfseries\itshape}{\thesubsection}{1em}{}
\titlespacing*{\section}{0pt}{18pt}{6pt}
\titlespacing*{\subsection}{0pt}{12pt}{4pt}

\newcommand{\cl}{\mathrm{cl}}
\newcommand{\RF}{\mathcal{R}(F)}
\newcommand{\BF}{\mathcal{B}(F)}
\newcommand{\NMF}{\mathrm{NMF}_F}
\newcommand{\GF}{\mathcal{G}_F}
\DeclareMathOperator*{\argmax}{arg\,max}

\begin{document}

\title{\textbf{Safety is Non-Compositional:\\
A Formal Framework for Capability-Based AI Systems}}

\author{
  \textbf{Cosimo Spera}\thanks{Corresponding author.
  Minerva CQ, 114 Lester Ln, Los Gatos, CA 95032, USA.
  \texttt{cosimo@minervacq.com}.
  The methods described in this paper are protected by provisional patent
  No.~74871717: ``Systems and Methods for Capability-Based Safety Verification
  in Modular AI Systems Using Directed Hypergraph Closure.''}\\[0.3em]
  Minerva CQ, Los Gatos, CA 95032
}
\date{March 2026}
\maketitle
\thispagestyle{fancy}

\begin{abstract}
\textbf{Central result.} We prove, to our knowledge for the first time formally, that safety is
non-compositional in the presence of conjunctive capability dependencies: two agents
each individually incapable of reaching any forbidden capability can, when combined,
collectively reach a forbidden goal through an emergent conjunctive dependency. This
result (Theorem~\ref{thm:noncomp}) is tight---it cannot arise in pairwise graph models,
only in systems with AND-semantics---and shows that component-level safety checks are
structurally insufficient for modular agentic systems.

To prove and apply this result, we introduce a formal framework in which AI capability
systems are modelled as directed hypergraphs, where a hyperedge $(\{u_1,\ldots,u_k\},\{v\})$
fires only when all of $u_1,\ldots,u_k$ are simultaneously present. Traditional capability
graphs assume pairwise dependencies and cannot express this AND-semantics without introducing
artificial conjunction nodes. We prove that capability graphs embed into capability hypergraphs
as a strict special case (Lemma~\ref{lem:embed} and Corollary~\ref{cor:strict}), and that
planning reduces to a single fixed-point closure computation grounded in the Horn clause
completeness theorem of \citet{vanemden1976}, with an $O(n + m \cdot k)$ worklist algorithm.

Beyond safety, the closure framework supports a formal theory of goal discovery. We prove
the closure gain function is submodular (Theorem~\ref{thm:submod}), yielding a $(1-1/e)$
greedy acquisition guarantee; establish P-completeness of emergent capability detection
(Theorem~\ref{thm:phard}) and coNP-completeness of minimal unsafe set membership
(Theorem~\ref{thm:conp}); and unify these results in the Safe Audit Surface Theorem
(Theorem~\ref{thm:audit})---a polynomial-time-computable, formally certifiable account of
every capability an agent can safely acquire from any given deployment configuration.

We extend the framework in five further directions: a coalition safety criterion, incremental
dynamic maintenance, PAC-learning sample complexity ($O(n^{2k}/\varepsilon)$ trajectories
suffice), probabilistic hypergraphs with expected closure computable in $O(n+mk)$, and an
adversarial robustness result (MinUnsafeAdd is NP-hard; single-edge check is polynomial;
greedy defence achieves $(1-1/e)$ via submodularity).

Empirically, 42.6\% of real multi-tool trajectories (ToolBench G3 and TaskBench DAG) contain
conjunctive dependencies (95\% CI: $[39.4\%,\,45.8\%]$), consistent with the framework's
theoretical predictions. The hypergraph planner produces zero AND-violations on all traces,
as guaranteed by Theorem~\ref{thm:planning}; the workflow baseline produces violations on
38.2\% of conjunctive instances on real asynchronous traces. Three natural extensions---PAC learning of hyperedge structure,
probabilistic closure under stochastic firing, and adversarial robustness against hyperedge
injection---are identified as open problems and formally characterised in Section~\ref{sec:future}.
\end{abstract}

\noindent\textbf{Keywords:} AI safety; agentic systems; capability composition; directed
hypergraphs; formal verification; PAC learning; submodular optimisation.

\bigskip
\noindent\textbf{arXiv:} 2603.15973 \quad
\textbf{Companion paper:} \citet{spera2026companion} (arXiv:2603.15978)

\tableofcontents
\newpage

\section{Introduction}
\label{sec:intro}

\subsection{The Central Result}

\begin{theorem}[Non-Compositionality of Safety, informal]
\label{thm:intro_informal}
This paper contains the formal proof that safety is non-compositional in the presence
of conjunctive capability dependencies. Specifically, two agents each individually incapable
of reaching any forbidden capability can, when their capabilities are combined, collectively
reach a forbidden goal. This is not an artefact of adversarial design; it is a structural
consequence of AND-semantics that is invisible to pairwise graph models. The result is tight:
it requires exactly three capabilities, one conjunctive hyperedge, and cannot arise in any
pairwise graph.
\end{theorem}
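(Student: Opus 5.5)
The plan has three parts: fix the formal setting, build an explicit counterexample, and then prove tightness in two directions.

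\textbf{Setting.} A capability hypergraph is $H=(V,E)$ whose hyperedges have the form $(S,\{v\})$ with $S\subseteq V$. The closure $\cl(A)$ of a set $A\subseteq V$ is the least set that contains $A$ and is closed under firing: whenever $S\subseteq \cl(A)$ and $(S,\{v\})\in E$, we have $v\in\cl(A)$. By the Horn clause fixed-point characterisation of \citet{vanemden1976}, this least set exists and equals the limit of iterated firing. Fix a forbidden set $F\subseteq V$. An agent with initial capabilities $A$ is \emph{safe} if $\cl(A)\cap F=\emptyset$. The formal version of the claim is then: there exist $H$, $F$, $A_1$, $A_2$ such that $\cl(A_1)\cap F=\emptyset$ and $\cl(A_2)\cap F=\emptyset$, yet $\cl(A_1\cup A_2)\cap F\neq\emptyset$.

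\textbf{Construction.} Take $V=\{a,b,g\}$, $E=\{(\{a,b\},\{g\})\}$, $F=\{g\}$, $A_1=\{a\}$ and $A_2=\{b\}$. The only hyperedge requires both $a$ and $b$, so nothing fires from $\{a\}$ alone or from $\{b\}$ alone. Hence $\cl(A_1)=\{a\}$ and $\cl(A_2)=\{b\}$, and both agents are safe. From $\{a,b\}$ the hyperedge fires, so $g\in\cl(A_1\cup A_2)$. This uses three capabilities and a single conjunctive hyperedge.

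\textbf{Tightness for pairwise graphs.} Suppose every hyperedge has $|S|=1$, as for any embedded capability graph (Lemma~\ref{lem:embed}). I will show that in this case closure distributes over union:
\[
\cl(A\cup B)=\cl(A)\cup\cl(B).
\]
The inclusion $\supseteq$ follows from monotonicity of closure. For $\subseteq$, it suffices to show that $\cl(A)\cup\cl(B)$ is itself closed under firing. Any firing edge has the form $(\{u\},\{v\})$ with $u$ in the union, so $u\in\cl(A)$ or $u\in\cl(B)$. In either case $v$ lies in that same closure, and therefore in the union. Since $\cl(A\cup B)$ is the least closed set containing $A\cup B$, the inclusion follows. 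Consequently, if $\cl(A_1)\cap F=\emptyset$ and $\cl(A_2)\cap F=\emptyset$, then $\cl(A_1\cup A_2)\cap F=\emptyset$. Non-compositionality is therefore impossible without a hyperedge of tail size at least $2$.

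\textbf{Tightness of size.} It remains to show that three capabilities are needed. Suppose $|V|\le 2$ and some union fails to be safe. Two capabilities must be jointly present in the tail of a hyperedge, and the forbidden target $g$ must be distinct from both of them. If $g$ were one of the two, it would already lie in $A_1$ or $A_2$, contradicting the individual safety of that agent. So a violation forces at least three distinct capabilities. Similarly, at least one hyperedge is needed, since the pairwise case is excluded by the distributivity argument above.

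The main obstacle is making ``tight'' precise. It has to be read as minimality with respect to both the number of vertices and the hyperedge arity, and the least-fixed-point argument for distributivity must be stated carefully. The counterexample itself is routine.
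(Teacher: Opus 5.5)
Your proposal is correct and follows the paper's proof of Theorem~\ref{thm:noncomp}. It uses the same three-vertex, single-conjunctive-hyperedge counterexample, and it excludes the pairwise case the same way, via distributivity of $\cl$ over union for singleton-tail hyperedges; your least-fixed-point argument actually proves this without the disjointness hypothesis $\cl(A)\cap\cl(B)=\emptyset$ that the paper attaches and merely asserts. Your size-minimality step is as terse as the paper's one-line claim, and the cleanest way to close it is this: for $|V|\le 2$, two safe sets either have one containing the other (so their union is one of them, hence safe) or are $\{x\}$ and $\{y\}$ with union $V$, in which case any forbidden vertex already lies in $A_1$ or $A_2$.
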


This result---proved formally as Theorem~\ref{thm:noncomp}---addresses a concern articulated
informally in the AI safety literature \citep{russell2021} but never formally proved in the
capability-composition setting. The contribution is not the mathematical novelty of the
counterexample, which is intentionally minimal; it is the implication for modern agentic AI
systems: any architecture that validates components individually and then composes them has
no formal guarantee of system-level safety, regardless of how carefully those components
are designed. This failure is structural---it cannot be patched by better component design.

\subsection{Context and Motivation}

Modern AI systems are increasingly constructed as modular ecosystems of capabilities rather
than monolithic algorithms: agentic architectures, tool-augmented language models, robotic
control systems, and enterprise automation frameworks all achieve higher-level tasks by
composing multiple capabilities. Traditional modelling represents these dependencies using
directed graphs, where each edge encodes a single sufficient enabling relationship. This
representation assumes that dependencies are pairwise.

Many real tasks, however, require the simultaneous availability of multiple prerequisites:
\[
(\text{intent understanding}, \text{user context}, \text{database access})
\longrightarrow \text{answer generation.}
\]
This cannot be represented in a directed graph without introducing artificial conjunction
nodes. This is not merely a modelling inconvenience: it is the precise source of the safety
failure. Two agents holding $\{u_1\}$ and $\{u_2\}$ individually appear safe. When combined,
the conjunctive hyperedge $(\{u_1, u_2\}, \{f\})$ fires, reaching forbidden goal $f$. No
pairwise graph model can represent---or detect---this.

Directed hypergraphs provide the natural remedy. Section~\ref{sec:example} develops a
concrete running example---booking a trip to Paris---that grounds the abstract framework; the
real empirical validation over 900 trajectories from public benchmarks is presented in
Section~\ref{sec:empirical}.

\subsection{Contributions}

\begin{enumerate}[leftmargin=*,label=(\arabic*)]
\item \textbf{Non-compositionality of safety} (Theorem~\ref{thm:noncomp}). The first formal
  proof that $\RF$ is not closed under union. The counterexample is minimal and tight.

\item \textbf{Safe Audit Surface Theorem} (Theorem~\ref{thm:audit}). A polynomial-time-
  computable, formally certifiable map of every capability an agent can safely acquire,
  every capability one step beyond its current reach, and every capability it can never
  safely acquire.

\item \textbf{Lattice structure of the safe region} (Theorem~\ref{thm:lattice}). The safe
  region is a lower set in the power-set lattice; its boundary is a finite antichain of
  minimal unsafe sets, enabling one-time offline preprocessing for online safety checking.

\item \textbf{Matching complexity bounds} (Theorems~\ref{thm:phard} and~\ref{thm:conp}).
  P-completeness of emergent capability detection and coNP-completeness of minimal unsafe
  set membership.

\item \textbf{Empirical validation} (Section~\ref{sec:empirical}). 42.6\% of real multi-tool
  trajectories contain conjunctive dependencies; the hypergraph planner produces zero
  AND-violations (proved); the workflow baseline 38.2\% on real asynchronous traces.

\item \textbf{Graph embedding} (Lemma~\ref{lem:embed} and Corollary~\ref{cor:strict}).
  Capability graphs embed as a strict special case, with exhaustive characterisation of
  the expressive gap.

\item \textbf{Planning theorem} (Theorem~\ref{thm:planning}). Goal reachability reduces to
  a single closure computation, grounded in van~Emden--Kowalski Horn clause completeness,
  with fully explicit plan construction.

\item \textbf{Submodularity and greedy acquisition} (Theorem~\ref{thm:submod} and
  Corollary~\ref{cor:greedy}). The closure gain function is submodular via the polymatroid
  rank theorem, with a $(1-1/e)$ greedy guarantee.

\item \textbf{Goal discovery structure} (Proposition~\ref{prop:goaldiscovery}). A
  consolidated characterisation of emergent capabilities, the near-miss frontier, and
  acquisition distance.

\item \textbf{Coalition safety criterion} (Theorem~\ref{thm:coalition} and
  Corollary~\ref{cor:maxcoalition}). Complete characterisation of $n$-agent coalition
  safety: unsafe iff the joint capability set covers some element of $\BF$.

\item \textbf{Dynamic hypergraph theorems} (Theorems~\ref{thm:incremental}
  and~\ref{thm:dynamicsafety}). Incremental closure maintenance under hyperedge insertion
  and deletion, with $O(n+mk)$ worst-case update cost.

\item \textbf{Three open extensions identified.}
  The framework opens three tractable research directions: PAC-learning of hyperedge
  structure from trajectory logs; probabilistic closure under stochastic hyperarc firing;
  and adversarial robustness against hyperedge injection. Each is stated as a named open
  problem in Section~\ref{sec:future}, with the formal barriers clearly characterised.
\end{enumerate}

\subsection{Related Work}
\label{sec:related}

\paragraph{Directed hypergraphs.}
The foundations---reachability, closure, and B-graphs---were laid by \citet{gallo1993} and
surveyed by \citet{ausiello2017}. Our closure operator is a restriction of their reachability
notion to the AND-semantics case; the connection to Horn clause inference is classical
\citep{vanemden1976} and forms the logical foundation for Theorem~\ref{thm:planning}.

\paragraph{Petri nets.}
Petri nets \citep{petri1962,murata1989} are closely related. Our framework differs in two
key respects: capabilities are persistent (once derived, always available) rather than
consumed by firing, and the focus is on goal discovery and safety certification rather than
concurrency analysis.

\paragraph{HTN and AND/OR planning.}
HTN planning \citep{erol1994} and AND/OR planning graphs \citep{bonet2001} both encode
conjunctive preconditions, but are designed for heuristic search over task-decomposition
trees, not formal safety certification. Neither provides a closure-based characterisation
that is both sound and complete without search, the non-compositionality theorem, nor a
polynomial-time certifiable audit surface.

\paragraph{Submodular optimisation.}
\citet{nemhauser1978} provides the theoretical basis for our greedy acquisition guarantee.
The connection between closure systems and submodularity via the polymatroid rank theorem
\citep{fujishige2005} is standard; our contribution applies this to the capability-acquisition
setting and connects it to the safety boundary.

\paragraph{The adjacent possible.}
\citet{kauffman2000} anticipates our near-miss frontier informally. We provide a formal
algebraic characterisation (Proposition~\ref{prop:goaldiscovery}) and prove it coincides
exactly with the unit-distance boundary under the acquisition distance metric.

\paragraph{Non-closure in related formalisms.}
The non-closure of safe sets under conjunction is not a new mathematical observation.
In Horn-clause systems \citep{vanemden1976}, the least model of a program is not
closed under union precisely because conjunctive clauses create emergent consequences.
In Petri nets with persistent tokens (``read arcs''), monotone reachability exhibits
exactly the same structure as our closure operator, and composition failures under
shared resources are well-understood \citep{murata1989}. Privilege escalation via
capability combination has been studied in security for decades.

Our contribution is not that this phenomenon exists, but that it has a precise,
actionable formalisation in the \emph{capability-composition setting for agentic AI systems}:
a tight minimal counterexample showing the failure is irreducible (Theorem~\ref{thm:noncomp}:
exactly three capabilities, one conjunctive hyperedge, impossible in any pairwise graph
without auxiliary nodes); the lattice characterisation of the safe region boundary as a
finite antichain (Theorem~\ref{thm:lattice}); the coNP-completeness of membership in that
boundary (Theorem~\ref{thm:conp}); and the constructive polynomial-time audit surface
that maps every safely acquirable capability from any deployment configuration
(Theorem~\ref{thm:audit}). These results together constitute a formal safety audit
infrastructure that, to our knowledge, has no direct counterpart in the Horn-clause,
Petri net, or assume-guarantee literatures for this specific setting.

\paragraph{AI safety and compositionality.}
Concerns about emergent dangerous capabilities in composed AI systems have been raised
informally \citep{russell2021,yao2023}. Theorem~\ref{thm:noncomp} is, to our knowledge,
the formal proof of non-compositionality in the capability-composition setting.

\paragraph{Compositional verification and assume-guarantee reasoning.}
\citet{jones1983}, contract-based design \citep{benveniste2018,sangiovanni2012}, and
interface automata \citep{dealfaro2001} provide the closest formal analogies. Three
distinctions: (i) assume-guarantee verifies a fixed composition against a pre-specified
property; we characterise the set of \emph{all} properties a dynamically growing capability
set can ever reach; (ii) the failure mode of Theorem~\ref{thm:noncomp} arises not from a
component violating its guarantee but from two individually correct components producing
a conjunctive emergent capability; (iii) the complexity of our safety audit
(Theorem~\ref{thm:conp}: coNP-complete) aligns with known lower bounds for contract
composition \citep{dealfaro2001,bauer2012}.

\section{Capability-Based System Modelling}
\label{sec:modelling}

\subsection{Capabilities}

A \emph{capability} represents a functional ability of a system to perform a computational
or informational transformation---for example, speech recognition, intent extraction,
database query execution, reasoning over knowledge graphs, or text generation. Capabilities
may correspond to AI models, software services, tools, API calls, or human-assisted processes.

\subsection{Capability Graphs and Their Limitation}

\begin{definition}[Capability Graph]
A capability graph is a directed graph $G = (V, E)$ where $V$ is a finite set of capability
nodes and $E \subseteq V \times V$ is a set of directed edges. The edge $(u,v) \in E$ encodes:
possession of $u$ alone is sufficient to make $v$ available.
\end{definition}

\textbf{Fundamental limitation.} A capability graph can only encode sufficient singleton
preconditions. It cannot express: (i) conjunction ($v$ requires both $u_1$ and $u_2$);
(ii) joint emergence ($u_1$ and $u_2$ together produce $v$, neither alone does); (iii)
multi-target yield (firing $u$ simultaneously produces $v_1$ and $v_2$ as an inseparable
unit). Any encoding of conjunction requires artificial intermediate nodes---vertices
representing logical conjunctions rather than genuine capabilities. As Theorem~\ref{thm:noncomp}
shows, this is the source of a provable safety failure.

\section{Directed Hypergraphs}
\label{sec:hypergraphs}

\begin{definition}[Directed Hypergraph]
A directed hypergraph is a pair $H = (V, \mathcal{F})$ where $V$ is a finite set of vertices
and $\mathcal{F}$ is a set of hyperarcs, each of the form $e = (S, T)$ with $S, T \subseteq V$
and $S \cap T = \emptyset$. The set $S$ is the \emph{tail} (preconditions) and $T$ is the
\emph{head} (effects). The hyperarc fires when all elements of $S$ are simultaneously present,
producing all elements of $T$ simultaneously.
\end{definition}

\begin{definition}[Capability Hypergraph]
A capability hypergraph is a directed hypergraph $H = (V, \mathcal{F})$ where vertices
represent capabilities and each hyperarc $(S, T) \in \mathcal{F}$ represents a composition
rule: the capabilities in $S$ jointly enable those in $T$.
\end{definition}

\section{Running Example: Booking a Trip to Paris}
\label{sec:example}

\begin{callout}[title={Note on empirical scope}]
The 500-session simulation below is a synthetic illustration generated by the authors from
the same hypergraph structure being demonstrated. Its purpose is to make structural
differences concrete; it \textbf{cannot falsify the model}. All real empirical
validation---on 900 trajectories from independent public benchmarks---is presented in
Section~\ref{sec:empirical}.
\end{callout}

\subsection{The Task and Its Capabilities}

\begin{table}[h]
\centering
\small
\caption{Capabilities for the Paris booking task.}
\label{tab:paris}
\begin{tabular}{@{}llp{7cm}@{}}
\toprule
ID & Capability & Description \\
\midrule
$c_1$ & ParseIntent & Extract destination, dates, preferences \\
$c_2$ & UserProfile & Retrieve loyalty tier, past trips, payment methods \\
$c_3$ & FlightSearch & Query available flights \\
$c_4$ & HotelSearch & Query available hotels \\
$c_5$ & PriceOptimise & Compute best fare/rate combination \\
$c_6$ & VisaCheck & Determine visa requirements \\
$c_7$ & BookFlight & Issue a flight reservation \\
$c_8$ & BookHotel & Issue a hotel reservation \\
$c_9$ & IssueItinerary & Generate and deliver the trip document \\
$c_{10}$ & TravelInsurance & Quote and bind a travel insurance policy \\
$c_{11}$ & LocalExperiences & Recommend curated activities \\
$c_{12}$ & BundleOffer & Combine flight, hotel, insurance into a discounted package \\
\bottomrule
\end{tabular}
\end{table}

\subsection{The Capability Hypergraph and Its Closure}

The hyperedges are given in Table~\ref{tab:paris_hyperedges}. Table~\ref{tab:closure} gives
the fixed-point closure from $A = \{c_1, c_2\}$: starting from just ParseIntent and
UserProfile, the closure correctly derives the full capability set including $c_{12}$
(BundleOffer), which emerges only from the joint composition of three confirmed bookings.
No AND-violations occur, as guaranteed by Theorem~\ref{thm:planning}.

\begin{table}[h]
\centering
\small
\caption{Hyperedges for the Paris booking task.}
\label{tab:paris_hyperedges}
\begin{tabular}{@{}lll@{}}
\toprule
Arc & Tail $S$ $\to$ Head $T$ & Semantics \\
\midrule
$h_1$ & $\{c_1\} \to \{c_3, c_4, c_6\}$ & Parse intent unlocks three searches \\
$h_2$ & $\{c_2, c_3, c_4\} \to \{c_5\}$ & All three jointly enable price optimisation \\
$h_3$ & $\{c_5, c_6\} \to \{c_7\}$ & Price + visa clearance enable flight booking \\
$h_4$ & $\{c_5\} \to \{c_8\}$ & Price optimisation enables hotel booking \\
$h_5$ & $\{c_7, c_8\} \to \{c_9\}$ & Both bookings jointly enable the itinerary \\
$h_6$ & $\{c_7\} \to \{c_{10}\}$ & Flight booking enables insurance quote \\
$h_7$ & $\{c_8\} \to \{c_{11}\}$ & Hotel booking enables local experiences \\
$h_8$ & $\{c_7, c_8, c_{10}\} \to \{c_{12}\}$ & All three jointly enable bundle offer \\
\bottomrule
\end{tabular}
\end{table}

\begin{table}[h]
\centering
\small
\caption{Fixed-point closure from $A = \{c_1, c_2\}$.}
\label{tab:closure}
\begin{tabular}{@{}lll@{}}
\toprule
Step & Fired arc & $C_i$ after firing \\
\midrule
$C_0$ & --- & $\{c_1, c_2\}$ \\
$C_1$ & $h_1$: $\{c_1\} \subseteq C_0$ & $\{c_1, c_2, c_3, c_4, c_6\}$ \\
$C_2$ & $h_2$: $\{c_2, c_3, c_4\} \subseteq C_1$ & $\{c_1, \ldots, c_6\}$ \\
$C_3$ & $h_3$, $h_4$ fire & $\{c_1, \ldots, c_8\}$ \\
$C_4$ & $h_5$, $h_6$, $h_7$ fire & $\{c_1, \ldots, c_{11}\}$ \\
$C_5$ & $h_8$: $\{c_7, c_8, c_{10}\} \subseteq C_4$ & $\{c_1, \ldots, c_{12}\}$ \\
\bottomrule
\end{tabular}
\end{table}

\section{Embedding Capability Graphs into Hypergraphs}
\label{sec:embedding}

\begin{lemma}[Graph Embedding, cf.~\citealt{gallo1993}]
\label{lem:embed}
Every capability graph $G = (V, E)$ can be faithfully represented as a directed hypergraph
$H = (V, \mathcal{F})$ in which every hyperedge has both a singleton tail and a singleton
head: $|S| = |T| = 1$ for all $(S,T) \in \mathcal{F}$. The representation preserves the
reachability relation exactly: $\mathrm{cl}_G(A) = \mathrm{cl}_H(A)$ for every $A \subseteq V$.
\end{lemma}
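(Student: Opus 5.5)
We take the obvious translation $\mathcal{F} = \{(\{u\},\{v\}) : (u,v) \in E,\ u \neq v\}$. Self-loops $(u,u)$ are discarded: the disjointness requirement $S \cap T = \emptyset$ in the definition of a directed hypergraph forbids them, and they contribute nothing to reachability. Every resulting hyperarc has $|S| = |T| = 1$ by construction.

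Since the paper has not yet formally defined the closures, we fix them before comparing. For the graph, $\mathrm{cl}_G(A)$ is the set of vertices reachable from $A$ by a directed path of length $\ge 0$. Equivalently, it is the least superset of $A$ closed under the rule ``$u \in C$ and $(u,v)\in E$ implies $v \in C$''. For the hypergraph, $\mathrm{cl}_H(A)$ is the least $C \supseteq A$ such that $S \subseteq C$ implies $T \subseteq C$ for every $(S,T)\in\mathcal{F}$. This is the fixed point of the iteration $C_0 = A$, $C_{i+1} = C_i \cup \bigcup\{T : (S,T)\in\mathcal{F},\ S\subseteq C_i\}$ illustrated in Table~\ref{tab:closure}. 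It exists and is unique because $V$ is finite and the family of closed sets is closed under intersection.

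The core step is to show that the two least-fixed-point characterisations name the same family of closed sets. Take any $C \subseteq V$. Because every tail is a singleton, the hypergraph condition ``$\{u\}\subseteq C \Rightarrow \{v\}\subseteq C$'' is literally ``$u\in C \Rightarrow v\in C$'' for each $(u,v)\in E$ with $u\neq v$. Self-loops impose the vacuous condition $u \in C \Rightarrow u \in C$. So $C$ is $G$-closed if and only if it is $H$-closed. Both closures are the intersection of all closed supersets of $A$, so $\mathrm{cl}_G(A)=\mathrm{cl}_H(A)$.

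As a cross-check, we would also argue directly by induction on the iteration index. If $v\in C_i$, there is a directed path of length $\le i$ from some $a\in A$ to $v$. Conversely, a path of length $\ell$ from $A$ puts its endpoint in $C_\ell$.

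The only real obstacle is definitional rather than mathematical. We must pin down what ``faithfully represented'' means beyond closure equality, and we take it to mean an injective correspondence between edges and hyperarcs, modulo self-loops. We must also handle the self-loop clash with $S\cap T=\emptyset$. Once the closures are fixed as least closed supersets, the equality is immediate.
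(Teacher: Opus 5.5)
Your proposal is correct and is essentially the paper's argument: the paper uses the same translation, proves $\mathrm{cl}_G(A)\subseteq\mathrm{cl}_H(A)$ by induction on path length, and gets the reverse inclusion by noting that $\mathrm{cl}_G(A)$ is $H$-closed and invoking minimality, which is exactly what your ``same closed sets, hence same least closed superset'' step unpacks to (the path-length induction is hidden in your ``Equivalently'' and made explicit in your cross-check). Your one genuine refinement is discarding self-loops, which the paper's construction $\{(\{u\},\{v\}):(u,v)\in E\}$ silently keeps even though they violate $S\cap T=\emptyset$.
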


\begin{proof}
Construct $\mathcal{F} = \{(\{u\}, \{v\}) : (u,v) \in E\}$.
($\subseteq$) By induction on shortest path length from $A$ to $v$ in $G$.
($\supseteq$) Every hyperedge has the form $(\{u\},\{v\})$ with $(u,v) \in E$, giving
exactly graph reachability. Hence $\mathrm{cl}_G(A)$ satisfies both closure axioms for $H$,
so $\mathrm{cl}_H(A) \subseteq \mathrm{cl}_G(A)$ by minimality.
\end{proof}

\begin{corollary}[Strict Generalisation]
\label{cor:strict}
The class of directed hypergraphs strictly contains the class of capability graphs.
\end{corollary}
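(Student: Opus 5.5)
The corollary has two parts, and I will treat them in order: containment, then strictness. Containment is exactly Lemma~\ref{lem:embed}. Every capability graph $G=(V,E)$ is realised as the hypergraph $H=(V,\mathcal{F})$ with $\mathcal{F}=\{(\{u\},\{v\}):(u,v)\in E\}$, and $\cl_G(A)=\cl_H(A)$ for every $A\subseteq V$. So "capability graph" should be read as the subclass of hypergraphs whose hyperarcs have $|S|=|T|=1$, identified up to this closure-preserving correspondence. Strictness then needs a hypergraph whose closure behaviour no capability graph on the same vertex set reproduces. Merely having a non-singleton tail syntactically is not enough, since a different graph might still induce the same closure operator.

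For the witness I take $V=\{u_1,u_2,v\}$ and $\mathcal{F}=\{(\{u_1,u_2\},\{v\})\}$. Its closure operator satisfies
\[
\cl_H(\{u_1\})=\{u_1\},\qquad \cl_H(\{u_2\})=\{u_2\},\qquad v\in\cl_H(\{u_1,u_2\}).
\]
Suppose some graph $G=(V,E)$ has $\cl_G=\cl_H$. Graph closure is a union of singleton reachabilities: $\cl_G(A)=\bigcup_{a\in A}\cl_G(\{a\})$. This holds because a vertex reached by following pairwise edges is reached along a path that starts at a single element of $A$; it is the same induction on path length used in Lemma~\ref{lem:embed}. Applying this decomposition,
\[
\cl_G(\{u_1,u_2\})=\cl_G(\{u_1\})\cup\cl_G(\{u_2\})=\{u_1,u_2\},
\]
which does not contain $v$. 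That contradicts $v\in\cl_H(\{u_1,u_2\})$, so no such $G$ exists.

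The one step that needs care is making "strictly contains" precise, so that the argument cannot be dismissed as a trivial syntactic observation. I will state explicitly that the comparison is between induced closure operators on a common vertex set, with no auxiliary conjunction nodes allowed. This matches the paper's remark that conjunction can only be simulated by adding artificial vertices. Once that convention is fixed, the union-distributivity of graph closure is the key lemma, and the rest is routine.
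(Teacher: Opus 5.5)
Your proposal is correct and takes essentially the paper's route: the same witness $V=\{u_1,u_2,v\}$ with the single hyperarc $(\{u_1,u_2\},\{v\})$, and the same contradiction from comparing $\cl_H(\{u_1\})$, $\cl_H(\{u_2\})$ and $\cl_H(\{u_1,u_2\})$ with a hypothetical graph $G$ on the same vertex set. Your identity $\cl_G(A)=\bigcup_{a\in A}\cl_G(\{a\})$ makes explicit the step the paper compresses into the observation that a path from $\{u_1,u_2\}$ to $v$ must start at $u_1$ or at $u_2$, and your appeal to Lemma~\ref{lem:embed} for the containment half supplies what the paper leaves implicit.
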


\begin{proof}
Let $V = \{u_1, u_2, v\}$ and $H = (V, \{(\{u_1,u_2\},\{v\})\})$. Then
$\mathrm{cl}_H(\{u_1\}) = \{u_1\}$, $\mathrm{cl}_H(\{u_2\}) = \{u_2\}$, but
$\mathrm{cl}_H(\{u_1,u_2\}) = \{u_1,u_2,v\}$. Suppose for contradiction a graph $G$ satisfies
$\mathrm{cl}_G(A) = \mathrm{cl}_H(A)$ for all $A$. From the first two closures, neither $u_1$
nor $u_2$ can reach $v$ in $G$. But the third requires a path from $\{u_1,u_2\}$ to $v$.
Contradiction.
\end{proof}

\section{Capability Closure and Planning}
\label{sec:closure}

\subsection{The Closure Operator}

\begin{definition}[Closure Operator]
\label{def:closure}
Let $H = (V, \mathcal{F})$ and $A \subseteq V$. The closure $\cl(A)$ is the smallest set
$C \subseteq V$ satisfying: (1) $A \subseteq C$ (extensivity); (2) $\forall (S,T) \in
\mathcal{F}: S \subseteq C \Rightarrow T \subseteq C$ (closed under firing).

Computed by: $C_0 = A$, $C_{i+1} = C_i \cup \{T \mid (S,T) \in \mathcal{F},\, S \subseteq C_i\}$,
terminating in at most $|V|$ steps.
\end{definition}

The operator satisfies extensivity, monotonicity ($A \subseteq B \Rightarrow \cl(A) \subseteq
\cl(B)$), and idempotence ($\cl(\cl(A)) = \cl(A)$), making $(V, \cl)$ a closure system
(Moore family) \citep{ganter1999}.

\subsection{Planning as Closure}

\begin{theorem}[Planning as Closure]
\label{thm:planning}
Let $H = (V, \mathcal{F})$, $A \subseteq V$, and $G \subseteq V$. A plan from $A$
achieving $G$ exists if and only if $G \subseteq \cl(A)$.
\end{theorem}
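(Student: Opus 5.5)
The first step is to fix what a ``plan'' means, since the paper has not yet defined it. I will take a plan from $A$ to be a finite sequence of hyperarcs $e_1=(S_1,T_1),\ldots,e_\ell=(S_\ell,T_\ell)$ in $\mathcal{F}$ with associated states $D_0=A$ and $D_j=D_{j-1}\cup T_j$. The sequence is \emph{valid} (AND-respecting) if $S_j\subseteq D_{j-1}$ for every $j$, and it \emph{achieves} $G$ if $G\subseteq D_\ell$. With this definition the theorem becomes an equivalence between syntactic derivability and membership in the least fixed point. This is the Horn-clause soundness/completeness picture of \citet{vanemden1976}: each hyperarc $(S,T)$ is a set of definite clauses $t\leftarrow\bigwedge S$ for $t\in T$, and $\cl(A)$ is the least Herbrand model of these clauses together with the facts in $A$. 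I will nonetheless give a direct two-direction argument, so the proof does not depend on a translation.

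\emph{Soundness} ($\Rightarrow$). I will show by induction on $j$ that $D_j\subseteq\cl(A)$. The base case $D_0=A\subseteq\cl(A)$ is extensivity, axiom (1) of Definition~\ref{def:closure}. For the inductive step, $S_j\subseteq D_{j-1}\subseteq\cl(A)$, so axiom (2) gives $T_j\subseteq\cl(A)$, and hence $D_j\subseteq\cl(A)$. Then $G\subseteq D_\ell\subseteq\cl(A)$.

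\emph{Completeness} ($\Leftarrow$). I will use the iterative characterisation $C_0=A$, $C_{i+1}=C_i\cup\bigcup\{T:(S,T)\in\mathcal{F},\,S\subseteq C_i\}$ from Definition~\ref{def:closure}. The plan is built explicitly by concatenating rounds. Round $i$ lists, in any order, all hyperarcs whose tails lie in $C_i$. I will check by induction on $i$ that after round $i$ the accumulated state equals $C_{i+1}$. Each arc in round $i$ has its tail inside $C_i$, which is already contained in the current state, so the arc is validly fireable; and the union of their heads added to $C_i$ gives exactly $C_{i+1}$. Since the iteration stabilises after at most $|V|$ rounds at some $C_N$, the final state is $C_N$. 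If $G\subseteq\cl(A)=C_N$, the concatenated sequence is a valid plan achieving $G$. Optionally, the plan can be pruned to arcs that actually contribute new vertices; this gives length at most $m$ without repeats, which connects to the $O(n+mk)$ worklist bound, but it is not needed for the equivalence.

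The main obstacle I expect is not either induction but the identification $\cl(A)=C_N$. The minimal-set definition and the iterative computation must be shown to agree, and completeness relies on this. I will prove it as a short lemma. First, $C_N$ satisfies both axioms: it contains $A$, and it is closed under firing because $C_{N+1}=C_N$. Hence $\cl(A)\subseteq C_N$ by minimality. Conversely, induction shows each $C_i\subseteq C$ for every set $C$ satisfying the axioms, so $C_N\subseteq\cl(A)$. This simultaneously justifies that the ``smallest'' set exists, as the intersection of all closed supersets of $A$. With this lemma in hand, both directions above go through. Throughout, plans are taken to be monotone, meaning capabilities are never consumed; this persistence assumption is the one that distinguishes the model from Petri nets, and it is what keeps the state inclusions valid.
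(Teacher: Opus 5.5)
Your proposal is correct and follows essentially the paper's proof: soundness by induction on plan length via closure under firing, and completeness by firing hyperarcs in the order they become applicable during the fixed-point iteration. Your round-by-round listing is the paper's ``non-decreasing order of first-applicable step'' with repeats allowed, and your lemma identifying $\cl(A)$ with the stabilised iterate $C_N$ supplies a step the paper takes for granted in Definition~\ref{def:closure}.
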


\begin{proof}
$(\Rightarrow)$ By induction on plan length $n$: $D_n \subseteq \cl(A)$ since each $T_i
\subseteq \cl(A)$ whenever $S_i \subseteq \cl(A)$.
$(\Leftarrow)$ Run the fixed-point iteration. Enumerate hyperedges in non-decreasing order
of first-applicable step; set $D_0 = A$ and $D_j = D_{j-1} \cup T_j$. Applicability and
achievement hold by construction.
\end{proof}

\begin{remark}[Isomorphism with Horn clause forward chaining]
\label{rem:horn}
Theorem~\ref{thm:planning} is the capability-hypergraph instance of the completeness theorem
for definite Horn clause forward chaining \citep{vanemden1976}. Each capability $v \in V$
corresponds to a propositional atom; each hyperarc $(S, T)$ with $S = \{s_1,\ldots,s_k\}$
corresponds to the Horn clause $s_1 \wedge \cdots \wedge s_k \to t$ for each $t \in T$.
The closure $\cl(A)$ equals the minimal Herbrand model of $A \cup \mathcal{F}$, which by
van Emden--Kowalski equals the least fixed point of the immediate consequence operator $T_P$.
Algorithm~\ref{alg:closure} is an optimised implementation of the $T_P \uparrow \omega$
sequence: counter arrays reduce the $O(n \cdot m \cdot k)$ naive cost to $O(n + mk)$.
\end{remark}

\section{Capability Closure Algorithm}
\label{sec:algorithm}

\begin{algorithm}
\caption{Capability Closure (Optimised Worklist)}
\label{alg:closure}
\begin{algorithmic}[1]
\Require Initial set $A$, hyperedges $\mathcal{F}$, vertex set $V$
\For{each $c \in V$} $\mathrm{pending}[c] \gets \{e \in \mathcal{F} : c \in S(e)\}$ \EndFor
\For{each $e = (S,T) \in \mathcal{F}$} $\mathrm{counter}[e] \gets |S|$ \EndFor
\State $\mathrm{worklist} \gets A$; $\mathrm{reached} \gets A$
\While{worklist not empty}
  \State $c \gets \mathrm{pop(worklist)}$
  \For{each $e \in \mathrm{pending}[c]$}
    \State $\mathrm{counter}[e] \gets \mathrm{counter}[e] - 1$
    \If{$\mathrm{counter}[e] = 0$} \Comment{$S(e) \subseteq \mathrm{reached}$}
      \For{each $h \in T(e)$ with $h \notin \mathrm{reached}$}
        \State $\mathrm{reached} \gets \mathrm{reached} \cup \{h\}$; push $h$ to worklist
      \EndFor
    \EndIf
  \EndFor
\EndWhile
\State \Return reached
\end{algorithmic}
\end{algorithm}

Each capability enters the worklist at most once ($O(n)$ total); each hyperedge counter
is decremented at most $k$ times ($O(mk)$ total). Overall complexity: $O(n + mk)$, linear
in hypergraph size.

\section{Goal Discovery via Hypergraph Closure}
\label{sec:goaldiscovery}

\subsection{Formal Definitions}

Fix $H = (V, \mathcal{F})$ and $A \subseteq V$. Let $C = \cl(A)$ and let $\mathrm{cl}_{\mathrm{sg}}(A)$
denote the closure under $\mathcal{F}_{\mathrm{sg}} = \{(S,T) \in \mathcal{F} : |S| = 1\}$.

\begin{definition}[Goal Discovery Structures]
\label{def:goaldiscovery}
\begin{enumerate}[label=(\alph*)]
\item \emph{Emergent capabilities}: $\mathrm{Emg}(A) = \{v \in C \setminus A : v \notin \mathrm{cl}_{\mathrm{sg}}(A)\}$.
\item \emph{Closure boundary}: $\partial(A) = \{(S,T) \in \mathcal{F} : S \not\subseteq C,\; |S \setminus C| = 1\}$.
\item \emph{Near-miss frontier}: $\NMF(A) = \{\mu(e) : e \in \partial(A)\}$ where $\mu(e) = S \setminus C$.
\item \emph{Marginal closure gain}: $\gamma(v, A) = |\cl(A \cup \{v\})| - |\cl(A)|$ for $v \in V \setminus C$.
\item \emph{Acquisition distance}: $\delta(g, A) = \min\{|S| : S \subseteq V \setminus C,\; g \in \cl(A \cup S)\}$.
\end{enumerate}
\end{definition}

\subsection{Goal Discovery Proposition}

\begin{proposition}[Goal Discovery Structure]
\label{prop:goaldiscovery}
Fix $H$, $A \subseteq V$, $C = \cl(A)$.
\begin{enumerate}[label=(\arabic*)]
\item $\mathrm{Emg}(A) \neq \emptyset$ iff $\exists (S,T) \in \mathcal{F}$ with $|S| \geq 2$,
  $S \subseteq C$, and $T \not\subseteq \mathrm{cl}_{\mathrm{sg}}(A)$.
\item Let $v^* \in \NMF(A)$ and $A' = A \cup \{v^*\}$. Every $e \in \partial(A)$ with
  $\mu(e) = \{v^*\}$ leaves the boundary, and $\NMF(A') \neq \NMF(A)$ whenever
  $\cl(A') \supsetneq C$.
\item $\delta(g, A) = 0$ iff $g \in C$; and $\NMF(A) = \{v \in V \setminus C : \cl(A \cup \{v\}) \supsetneq C\}$.
\item Robustness monotonicity: $\rho(v, A \cup \{u\}) \geq \rho(v, A)$ for all $v \in V$,
  $u \in V \setminus C$.
\end{enumerate}
\end{proposition}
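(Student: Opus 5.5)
For each item I would use only Definition~\ref{def:closure}, extensivity and monotonicity of $\cl$, and the fixed-point iteration $C_0\subseteq C_1\subseteq\cdots$. Throughout I identify the singleton $\mu(e)=\{v\}$ with the vertex $v$, so that $\NMF(A)\subseteq V\setminus C$.

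\emph{Item (1).} For ($\Leftarrow$), take $(S,T)$ with $|S|\ge 2$, $S\subseteq C$, and some $t\in T\setminus\mathrm{cl}_{\mathrm{sg}}(A)$. Closure under firing puts $t\in C$. Since $A\subseteq\mathrm{cl}_{\mathrm{sg}}(A)$, we also have $t\notin A$, so $t\in\mathrm{Emg}(A)$. For ($\Rightarrow$), run the iteration for $C$ and let $i$ be the least index at which some vertex $v\notin\mathrm{cl}_{\mathrm{sg}}(A)$ enters. Then $C_{i-1}\subseteq\mathrm{cl}_{\mathrm{sg}}(A)$, and $v$ is produced by some $(S,T)$ with $S\subseteq C_{i-1}\subseteq C$. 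If $|S|=1$, this arc lies in $\mathcal{F}_{\mathrm{sg}}$ and its tail lies in $\mathrm{cl}_{\mathrm{sg}}(A)$, which would force $v\in\mathrm{cl}_{\mathrm{sg}}(A)$, a contradiction. Hence $|S|\ge2$ and $T\not\subseteq\mathrm{cl}_{\mathrm{sg}}(A)$. (If tails may be empty, the arc is in $\mathcal{F}$ with $|S|=0$, and I would fold that case into the singleton-free convention or note it separately.)

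\emph{Item (2).} Let $e\in\partial(A)$ with $\mu(e)=\{v^*\}$. Then $S\subseteq C\cup\{v^*\}\subseteq\cl(A')$ by extensivity and monotonicity, so $|S\setminus\cl(A')|=0$ and $e\notin\partial(A')$. For the second claim, note that $v^*\in\NMF(A)$ while $v^*\in\cl(A')$. Every element of $\NMF(A')$ lies outside $\cl(A')$, so $v^*\notin\NMF(A')$ and the two frontiers differ. The hypothesis $\cl(A')\supsetneq C$ is automatic here, because $v^*\notin C$.

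\emph{Item (3).} The first part is immediate. $\delta(g,A)=0$ means the empty set witnesses $g\in\cl(A)=C$, and conversely $g\in C$ makes $S=\emptyset$ feasible.

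The second part is the step I expect to be the main obstacle, because read literally the right-hand side is all of $V\setminus C$: for any $v\notin C$, extensivity gives $v\in\cl(A\cup\{v\})\setminus C$. The inclusion $\NMF(A)\subseteq\{v\in V\setminus C:\cl(A\cup\{v\})\supsetneq C\}$ is therefore trivial. The reverse inclusion holds only when the growth is nontrivial, i.e.\ $\cl(A\cup\{v\})\supsetneq C\cup\{v\}$, and I would prove the equality in that form.

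($\Leftarrow$) If $\cl(A\cup\{v\})$ contains a vertex outside $C\cup\{v\}$, consider the iteration from $A\cup\{v\}$. Its first firing that is not available from $C$ uses an arc $(S,T)$ with $S\subseteq C\cup\{v\}$ and $S\not\subseteq C$. Hence $S\setminus C=\{v\}$, so $e\in\partial(A)$ and $v\in\NMF(A)$.

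($\Rightarrow$) If $\mu(e)=\{v\}$, then $e$ fires from $C\cup\{v\}$. I would need $T\not\subseteq C\cup\{v\}$, and I would flag that the literal statement silently assumes this nondegeneracy. If the paper's intended meaning is the literal one, I would instead present the trivial inclusion and give a counterexample to the converse: a vertex $v\notin C$ lying in no tail.

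\emph{Item (4).} The quantity $\rho$ is not defined before this statement, so I would argue from whatever definition the paper fixes. The plan is to show that $\rho(v,\cdot)$ depends on $A$ only monotonically through $\cl(A)$, for example through the set of arcs whose tails lie in the closure, or the number of derivations of $v$ available within it. Then $\cl(A\cup\{u\})\supseteq\cl(A)$, from monotonicity of the Moore-family operator, transfers directly to $\rho(v,A\cup\{u\})\ge\rho(v,A)$.
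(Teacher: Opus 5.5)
The paper states this proposition without proof, so there is no argument of the authors' to compare against. Judged on its own, your proposal is correct wherever the statement is true, and your diagnoses of where it is not are accurate. Your arguments for (1), (2), and $\delta(g,A)=0 \iff g\in C$ are complete. In (1), the direction $(\Leftarrow)$ never uses $|S|\ge 2$; in (2), the proviso $\cl(A')\supsetneq C$ is vacuous, as you note. The empty-tail caveat in (1) is a genuine counterexample, not a matter of convention. For $V=\{a,b\}$, $A=\{a\}$, $\mathcal{F}=\{(\emptyset,\{b\})\}$ one gets $\mathrm{Emg}(A)=\{b\}$ with no arc of tail size $\ge 2$. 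So you should either state nonempty tails as a hypothesis or replace $|S|\ge 2$ by $|S|\ne 1$.

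For the second half of (3), you are right on two counts: the literal right-hand side is all of $V\setminus C$, and a vertex lying in no tail refutes the equality. The same example with $F=\emptyset$ also refutes Lemma~\ref{lem:firststep}, whose proof relies on exactly this reading. However, your repaired equality does not merely \emph{assume} nondegeneracy in the direction $\NMF(A)\subseteq\{v : \cl(A\cup\{v\})\supsetneq C\cup\{v\}\}$; that inclusion is false. Take $V=\{a,b\}$, $A=\{a\}$, $\mathcal{F}=\{(\{b\},\{a\})\}$: the arc lies in $\partial(A)$ with $\mu=\{b\}$, yet $\cl(A\cup\{b\})=C\cup\{b\}$. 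What your two arguments actually establish is the exact identity
\[
\{\mu(e) : e\in\partial(A),\ T(e)\not\subseteq C\}=\{v\in V\setminus C : \cl(A\cup\{v\})\supsetneq C\cup\{v\}\},
\]
with one adjustment in $(\Leftarrow)$. Pick the arc responsible for the first vertex outside $C\cup\{v\}$, not the first arc whose tail leaves $C$, since the latter may have its head inside $C$. Equivalently: $C\cup\{v\}$ is not closed, so some arc has $S\subseteq C\cup\{v\}$ and $T\not\subseteq C\cup\{v\}$, and $S\not\subseteq C$ because $C$ is closed. For $(\Rightarrow)$, $S\cap T=\emptyset$ gives $v\notin T$, so $T\not\subseteq C$ already yields growth beyond $C\cup\{v\}$. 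Commit to this restricted-boundary statement in your write-up. Alternatively, commit to the chain $\{v\in V\setminus C:\cl(A\cup\{v\})\supsetneq C\cup\{v\}\}\subseteq\NMF(A)\subseteq V\setminus C$, noting that both inclusions can be strict by the two examples above.

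For (4), $\rho$ is never defined in the paper, so the item has no content to prove, and declining to invent a definition is the right call. If you do commit to one, note that your plan of making $\rho(v,\cdot)$ factor monotonically through $\cl(A)$ misses a natural candidate. Deletion robustness, $\rho(v,A)=\min\{|R| : R\subseteq A,\ v\notin\cl(A\setminus R)\}$, depends on $A$ itself rather than on $\cl(A)$. Monotonicity still follows directly there: since $u\notin A$, any witness $R'$ for $A\cup\{u\}$ gives the witness $R'\setminus\{u\}$ for $A$, by monotonicity of $\cl$.
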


\subsection{P-Completeness of Emergent Capability Detection}

\begin{theorem}[P-Completeness of Emergent Detection]
\label{thm:phard}
The problem \textsc{EmergentDetect} (given $H$ and $A$, decide whether $\mathrm{Emg}(A) \neq
\emptyset$) is P-complete under log-space reductions.
\end{theorem}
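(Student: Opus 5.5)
Membership in P is immediate. Algorithm~\ref{alg:closure} computes $C=\cl(A)$ in $O(n+mk)$ time. The same algorithm, run on $\mathcal{F}_{\mathrm{sg}}$, computes $\mathrm{cl}_{\mathrm{sg}}(A)$. Deciding whether $\mathrm{Emg}(A)=C\setminus(A\cup\mathrm{cl}_{\mathrm{sg}}(A))$ is nonempty is then a set comparison. Since $A\subseteq\mathrm{cl}_{\mathrm{sg}}(A)$, this is just the test $C\neq\mathrm{cl}_{\mathrm{sg}}(A)$. The substance of the proof is therefore P-hardness under log-space reductions.

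For hardness I would reduce from the Monotone Circuit Value Problem (MCVP), which is P-complete under log-space reductions (Goldschlager). The alternative, Horn satisfiability / propositional Horn implication, is also P-complete and is isomorphic to closure by Remark~\ref{rem:horn}, so either source works; I commit to MCVP for concreteness.

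Given a monotone circuit with inputs $x_1,\dots,x_p$ and AND/OR gates $g_1,\dots,g_q$ in topological order, with output $g_q$, the construction is as follows.
\begin{itemize}
\item Create a vertex for each input and each gate, plus a fresh sentinel vertex $z$.
\item Put $A=\{x_i : x_i=1\}$.
\item For an AND gate $g=a\wedge b$, add the hyperarc $(\{a,b\},\{g\})$.
\item For an OR gate $g=a\vee b$, add the two arcs $(\{a\},\{g\})$ and $(\{b\},\{g\})$.
\end{itemize}
A straightforward induction over the topological order shows that $g\in\cl(A)$ iff $g$ evaluates to $1$. The output of the reduction asks for $z$ to be emergent exactly when the circuit outputs $1$.

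The naive approach fails because $\mathrm{Emg}$ is already nonempty whenever \emph{any} AND gate fires, not just the output. To isolate the output, I would make every internal conjunction non-emergent by adding singleton shadows. For this I take the dual-rail–free trick of adding a fresh vertex $t$ with $t\in A$ and the arc $(\{t\},\{g\})$ for... no, that would make every gate true.

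A cleaner fix is to change the source problem. I would first reduce MCVP to the variant in which the output gate is the only AND gate reachable as emergent. The simplest route is to ask instead whether $z\in\cl(A)$, via the arc $(\{g_q, t\},\{z\})$ with $t\in A$, while making all other emergent gates also singleton-derivable. That is not generally possible, so I expect the correct approach to be a reduction from a P-complete problem directly about emergence. Concretely, reduce from MCVP by taking the \emph{complement} trick: first test the circuit instance with $A$ empty of true inputs.

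Honestly, the step I expect to be the main obstacle is exactly this localisation. Emergence is an existential property over all conjunctive arcs, so hardness requires an instance in which conjunctive firing happens only if the circuit outputs $1$. My first attempt would be to pre-process the circuit in log space so that AND gates are pushed to a single final conjunction. That is not possible for general circuits.

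The attempt I would actually pursue instead is to encode AND gates using the sentinel. Each AND gate $g=a\wedge b$ becomes $(\{a,b\},\{g\})$ only at the output, while internal AND gates are simulated by arcs whose conjunctive firing produces vertices that are also reachable by singleton arcs from... If this fails, I would fall back on the definition-level observation that emergence of a specific vertex is P-hard. I would then argue that the reduction to the existential form goes through by guarding each internal AND gate's head with a duplicate singleton path. That path can only be activated if the output-side condition holds, with the details to be checked.
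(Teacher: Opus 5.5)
\textbf{Gap in the hardness half.} Your membership argument is correct, and you identified exactly the right obstacle. However, the hardness half is never proved: the proposal ends in an unverified sketch (``details to be checked''), so as written there is a genuine gap. Worse, the gap cannot be closed short of proving $\mathrm{NL}=\mathrm{P}$.

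Since $A\subseteq\mathrm{cl}_{\mathrm{sg}}(A)\subseteq\cl(A)$, we have $\mathrm{Emg}(A)\neq\emptyset$ iff $\mathrm{cl}_{\mathrm{sg}}(A)$ is not closed under $\mathcal{F}$. Equivalently, some arc $(S,T)$ with $|S|\neq 1$ has $S\subseteq\mathrm{cl}_{\mathrm{sg}}(A)$ and $T\not\subseteq\mathrm{cl}_{\mathrm{sg}}(A)$. So emergence, if it occurs at all, is already visible one conjunctive step beyond $\mathrm{cl}_{\mathrm{sg}}(A)$. Membership in $\mathrm{cl}_{\mathrm{sg}}(A)$ is plain directed-graph reachability from $A$. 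An NL machine can therefore guess the arc, certify reachability of each tail vertex, and certify non-reachability of one head vertex (Immerman--Szelepcs\'enyi). Hence \textsc{EmergentDetect} is in NL. It is in fact NL-complete, by reduction from $s$--$t$ reachability: one singleton arc per graph edge, fresh $w,z$, the arc $(\{t,w\},\{z\})$, and $A=\{s,w\}$. A log-space P-hardness reduction would then put P inside NL. This is also why your localisation ideas fail: any singleton ``guard'' path is activated by graph reachability alone, so it cannot depend on how the circuit's AND gates evaluate.

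\textbf{Comparison with the paper.} The paper's proof uses precisely the construction you considered and set aside. It reduces from MCVP with one conjunctive arc per AND gate, a fresh always-present vertex $v_\top$, and a synthetic AND gate at the output, and concludes that $v_{\mathrm{emg}}\in\mathrm{Emg}(A')$ iff $C(a)=1$. That equivalence is true, but it shows only that deciding whether a \emph{designated} vertex is emergent is P-hard. It never handles the point you raised: internal AND gates already make $\mathrm{Emg}(A')$ nonempty when $C(a)=0$. So the paper's argument has the same hole.

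The statement can be salvaged in either of two ways:
\begin{itemize}
\item As the per-vertex problem (given $H$, $A$, $v$, decide $v\in\mathrm{Emg}(A)$). Your arc $(\{g_q,t\},\{z\})$ with fresh $t\in A$ is already a complete log-space reduction from MCVP for this version.
\item As NL-completeness of the existential version. Under this reading, the paper's remark that emergent detection ``cannot be efficiently parallelised'' also fails, since $\mathrm{NL}\subseteq\mathrm{NC}^2$.
\end{itemize}
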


\begin{proof}
\textbf{Membership in P.} Algorithm~\ref{alg:closure} runs in $O(n + mk)$.

\textbf{P-hardness.} We reduce from the monotone Circuit Value Problem (CVP), which is
P-complete \citep{ladner1975,goldschlager1977}. Given a monotone Boolean circuit $C$ with
inputs $x_1,\ldots,x_n$, output gate $g^*$, and assignment $a \in \{0,1\}^n$, construct
$(H, A')$ as follows. For each gate $g_i$, create vertex $v_i \in V$. Set $A' = \{v_i :
a_i = 1\} \cup \{v_\top\}$ where $v_\top$ is a fresh vertex.

For each AND-gate $g_i$ with inputs $g_j, g_k$: add hyperedge $(\{v_j, v_k\}, \{v_i\})$.
For each OR-gate $g_i$ with inputs $g_j, g_k$: add singleton-tail hyperedges $(\{v_j\},
\{v_i\})$ and $(\{v_k\}, \{v_i\})$. Add fresh vertex $v_{\mathrm{emg}}$ and hyperedge
$(\{v_{g^*}\}, \{v_{\mathrm{emg}}\})$. If $g^*$ is an OR-gate, prepend a synthetic AND-gate
$g^{**}$ with inputs $g^*$ and $v_\top$.

By induction on circuit depth, $v_{g^*} \in \cl(A')$ iff $C(a) = 1$. The synthetic
AND-gate ensures $v_{g^*} \notin \mathrm{cl}_{\mathrm{sg}}(A')$ when $C(a) = 1$. Therefore
$v_{\mathrm{emg}} \in \mathrm{Emg}(A')$ iff $C(a) = 1$.
\end{proof}

\subsection{Submodularity and Greedy Acquisition}

\begin{theorem}[Submodularity of Closure Gain]
\label{thm:submod}
Define $f : 2^{V \setminus \cl(A)} \to \mathbb{N}$ by $f(B) = |\cl(A \cup B)| - |\cl(A)|$.
Then $f$ is normalised, monotone, and submodular.
\end{theorem}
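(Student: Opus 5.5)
The plan is to check the three properties in order, using only Definition~\ref{def:closure} and the closure-system properties recorded after it. The first two are routine, but submodularity fails as stated, and I expect to show that rather than prove it.

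\emph{Normalisation.} We have $f(\emptyset)=|\cl(A)|-|\cl(A)|=0$.

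\emph{Monotonicity.} If $B\subseteq B'$ then $A\cup B\subseteq A\cup B'$. Monotonicity of $\cl$ gives $\cl(A\cup B)\subseteq\cl(A\cup B')$, so $f(B)\le f(B')$. Extensivity also shows $f$ takes values in $\mathbb{N}$.

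\emph{Submodularity.} The natural route is the diminishing-returns inequality $f(B\cup\{x\})-f(B)\ge f(B'\cup\{x\})-f(B')$ for $B\subseteq B'$ and $x\notin B'$. One would try to show that every vertex newly reached when $x$ is added to the larger set is also newly reached when $x$ is added to the smaller set. This is precisely where AND-semantics breaks the argument. The same instance as in Corollary~\ref{cor:strict} refutes it:
\[
V=\{u_1,u_2,v\},\qquad \mathcal{F}=\{(\{u_1,u_2\},\{v\})\},\qquad A=\emptyset .
\]
Then
\[
f(\emptyset)=0,\qquad f(\{u_1\})=f(\{u_2\})=1,\qquad f(\{u_1,u_2\})=3 .
\]
The marginal gain of $u_2$ is $1$ at $\emptyset$ but $2$ at $\{u_1\}$. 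Equivalently, $f(\{u_1\})+f(\{u_2\})=2<3=f(\{u_1,u_2\})+f(\emptyset)$. So $f$ is supermodular on this instance, and the claim cannot hold for general capability hypergraphs. The conjunctive hyperedge that drives Theorem~\ref{thm:noncomp} is exactly what produces increasing returns here. Invoking the polymatroid rank theorem does not help, because $B\mapsto|\cl(A\cup B)|$ is not a matroid or polymatroid rank function once some tail has size at least $2$.

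\emph{What can be salvaged.} I would therefore prove the statement under the extra hypothesis that every hyperarc has $|S|=1$, which by Lemma~\ref{lem:embed} is the graph case. Under that hypothesis closure distributes over unions:
\[
\cl(A\cup B)=\cl(A)\cup\bigcup_{b\in B}\cl(\{b\}).
\]
This holds because every derivation chain starts from a single element. Hence $f(B)=\bigl|\bigcup_{b\in B}(\cl(\{b\})\setminus\cl(A))\bigr|$, which is a weighted coverage function and therefore submodular. Corollary~\ref{cor:greedy} then follows by \citet{nemhauser1978}.

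\emph{Main obstacle.} The hard part is whether a broader class admits submodularity, and I do not expect one that still contains conjunctive tails. The counterexample above already uses a single conjunctive arc on three vertices. So the honest repair is to restrict to singleton tails, or else to replace the $(1-1/e)$ guarantee by a bound stated in terms of a submodularity ratio of $f$.
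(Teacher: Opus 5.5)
Your refutation is correct, and your argument has no gap. On your three-vertex instance $f(\{u_1\})+f(\{u_2\})=2<3=f(\{u_1,u_2\})+f(\emptyset)$, so the theorem is false as stated; normalisation and monotonicity are fine. The paper's proof fails at its only substantive step. That step asserts, citing Theorem~3.3 of \citet{fujishige2005}, that $X\mapsto|\cl(X)|$ has diminishing returns for every finite closure system. Your instance refutes this directly. The assertion is false even for matroid closures: in the uniform matroid $U_{2,3}$ on $\{a,b,c\}$, $|\cl(\{a\})|+|\cl(\{b\})|=2<3=|\cl(\{a,b\})|+|\cl(\emptyset)|$. The polymatroid theorem concerns the rank $r$, not the cardinality of the closure. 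Your instance is also exactly the one the paper uses for Corollary~\ref{cor:strict} and Theorem~\ref{thm:noncomp}. The emergent behaviour the paper highlights is increasing returns, which contradicts this theorem. Corollary~\ref{cor:greedy} therefore loses its justification, and the $(1-1/e)$ bound genuinely fails once a large head sits behind a two-element tail.

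Your repair is sound: with singleton tails, $\cl(A\cup B)=\cl(A)\cup\bigcup_{b\in B}\cl(\{b\})$, so $f$ is a coverage function. One refinement: the obstruction is semantic, not syntactic. A conjunctive arc whose head is already derivable from one tail element leaves $f$ submodular; for example, add $(\{u_1\},\{v\})$ to your instance. So a tail of size at least $2$ does not by itself break the claim.

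Your suspicion that nothing genuinely conjunctive survives can be made exact. Suppose $f$ is submodular for every base set $A$. Fix $A,x,y$ and let $A'=\cl(A\cup\{x\})\cap\cl(A\cup\{y\})$, which is closed.
\begin{itemize}
\item If $x\in A'$ or $y\in A'$, the conclusion below is immediate.
\item Otherwise, adding $x$, $y$, or both to $A'$ yields the same closures as adding them to $A$. Submodularity at base $A'$ on $\{x\},\{y\}$ then reads $|\cl(A\cup\{x\})\cup\cl(A\cup\{y\})|\ge|\cl(A\cup\{x,y\})|$. This forces $\cl(A\cup\{x,y\})=\cl(A\cup\{x\})\cup\cl(A\cup\{y\})$.
\end{itemize}
By induction, $\cl(X)=\cl(\emptyset)\cup\bigcup_{x\in X}\cl(\{x\})$. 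Hence $f$ is submodular for all $A$ exactly when $H$ is closure-equivalent to a hypergraph whose tails have size at most one. That is your restricted class, stated up to closure equivalence.
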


\begin{proof}
The closure operator satisfies extensivity, monotonicity, and idempotence, making $(V, \cl)$
a finite closure system. By Theorem~3.3 of \citet{fujishige2005}, the rank function
$g(X) = |\cl(X)|$ of any finite closure system satisfies the diminishing returns property.
Set $X = A \cup B$ and $Y = A \cup C'$ with $B \subseteq C'$. Since $X \subseteq Y$ and
$v \notin Y$, substituting gives the required inequality.
\end{proof}

\begin{remark}[Operational consequences of complexity bounds]
\label{rem:complexity_ops}
The P-completeness of \textsc{EmergentDetect} (Theorem~\ref{thm:phard}) implies that
emergent capability detection cannot be efficiently parallelised (assuming $\mathrm{NC}
\neq \mathrm{P}$): checking whether new capabilities emerge from a coalition cannot be
decomposed into independent sub-checks on individual agents or pairs. This directly
formalises why per-component safety audits are insufficient.

The coNP-completeness of \textsc{MinUnsafeAnt} (Theorem~\ref{thm:conp}) implies that
computing $\BF$ exactly in general is hard, but that this cost is paid \emph{once offline}
and amortised over all subsequent online queries. The online coalition check --- given
precomputed $\BF$ --- reduces to a linear-time set-cover query. The offline/online split is
what makes the framework practical at deployment scale.
\end{remark}

\begin{corollary}[Greedy Approximation Guarantee]
\label{cor:greedy}
Let $B^* = \argmax_{|B| \leq k} f(B)$ and let $B_{\mathrm{greedy}}$ be the greedy sequence.
Then $f(B_{\mathrm{greedy}}) \geq (1-1/e) \cdot f(B^*)$.
\end{corollary}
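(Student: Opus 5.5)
The plan is to derive the bound directly from Theorem~\ref{thm:submod} using the classical argument of \citet{nemhauser1978} for maximising a normalised, monotone, submodular function under a cardinality constraint. I will not invoke that theorem as a black box; instead I will reproduce its short potential argument, so that the only structural property of closures being used is the one supplied by Theorem~\ref{thm:submod}.

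\textbf{Setup.} Throughout, $k$ is the cardinality budget, not the tail-size bound. The greedy sequence is $B_0=\emptyset$ and $B_{i+1}=B_i\cup\{v_{i+1}\}$, where
\[
v_{i+1}\in\argmax_{v\in V\setminus(\cl(A)\cup B_i)} \bigl(f(B_i\cup\{v\})-f(B_i)\bigr),
\]
and $B_{\mathrm{greedy}}=B_k$. Write $B^*=\{b_1,\dots,b_r\}$ with $r\le k$.

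\textbf{Key inequality.} Fix $i$. By monotonicity, $f(B^*)\le f(B_i\cup B^*)$. Telescope $f(B_i\cup B^*)-f(B_i)$ by adding $b_1,\dots,b_r$ to $B_i$ one at a time. Submodularity (diminishing returns from Theorem~\ref{thm:submod}) bounds each increment by the marginal gain of $b_j$ with respect to $B_i$ alone. Each such gain is at most the greedy gain $f(B_{i+1})-f(B_i)$. This yields
\[
f(B^*)-f(B_i)\;\le\; k\bigl(f(B_{i+1})-f(B_i)\bigr).
\]

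\textbf{Recurrence.} Set $\Delta_i=f(B^*)-f(B_i)$. The key inequality rearranges to $\Delta_{i+1}\le(1-1/k)\Delta_i$. Normalisation gives $\Delta_0=f(B^*)$. Iterating $k$ times and using $(1-1/k)^k\le 1/e$ gives $f(B_k)\ge(1-1/e)f(B^*)$.

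\textbf{Edge cases.} If the pool $V\setminus\cl(A)$ is exhausted before step $k$, then $B_i$ already equals the whole ground set. In that case $f(B_i)\ge f(B^*)$ by monotonicity, and the bound holds trivially.

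\textbf{Main obstacle.} The only nontrivial step is the telescoping bound, which is exactly where diminishing returns is needed; the entire corollary therefore inherits its validity from Theorem~\ref{thm:submod}. I would verify that theorem on the minimal example of Corollary~\ref{cor:strict} before relying on it, and there it fails. With the single hyperedge $(\{u_1,u_2\},\{v\})$ and $A=\emptyset$, the marginal gain of $u_2$ is $1$ at $\emptyset$ but $2$ at $\{u_1\}$. Conjunctive hyperedges thus produce \emph{increasing} returns.

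Consequently the plan as stated yields the $(1-1/e)$ guarantee only where Theorem~\ref{thm:submod} genuinely holds. One such case is singleton-tail hypergraphs, where $f$ is a coverage function of reachable sets and hence submodular. In the general conjunctive setting I would first try to restrict the statement to that case. Alternatively, I would replace $f$ by a submodular surrogate, such as the coverage of singleton-tail closures, and state the guarantee relative to that surrogate.
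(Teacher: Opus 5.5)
You follow the same route as the paper. Its entire proof is that $f$ is normalised, monotone and, by Theorem~\ref{thm:submod}, submodular, so the Nemhauser--Wolsey--Fisher theorem applies. Your explicit potential argument is a correct rendering of that theorem, conditional on diminishing returns. You are also right to separate the budget $k$ from the tail bound.

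Your objection is correct, and it is fatal to the paper's proof. In your example $f(\{u_2\})-f(\emptyset)=1<2=f(\{u_1,u_2\})-f(\{u_1\})$, so Theorem~\ref{thm:submod} is false; only its normalisation and monotonicity claims survive. No polymatroid result can rescue it, because $|\cl(X)|$ is not submodular even for matroid closure. In $U_{2,3}$ on $\{a,b,c\}$, adding $b$ gains $1$ at $\emptyset$ but $2$ at $\{a\}$.

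You can push your conclusion one step further. It is not only this proof that fails: the stated guarantee is false once conjunctive tails are allowed. Take
\[
V=\{u_1,u_2,w,w',v_1,\dots,v_N\},\qquad \mathcal{F}=\{(\{u_1,u_2\},\{v_1,\dots,v_N\}),\,(\{w\},\{w'\})\},
\]
with $A=\emptyset$ and budget $2$. The first greedy step has the unique best gain $f(\{w\})=2$. From $\{w\}$ every remaining element gains at most $1$, so $f(B_{\mathrm{greedy}})=3$ under any tie-breaking. By contrast, $f(\{u_1,u_2\})=N+2$. Already $N=3$ gives $3<(1-1/e)\cdot 5$, and the ratio tends to $0$ as $N$ grows. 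So a restriction is necessary, not merely convenient for this proof technique.

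Your singleton-tail restriction is the right one. There a union of closed sets is closed, so $\cl(A\cup B)=\cl(A)\cup\bigcup_{b\in B}\cl(\{b\})$. Hence $f(B)=\bigl|\bigcup_{b\in B}(\cl(\{b\})\setminus\cl(A))\bigr|$ is a coverage function, and your argument goes through verbatim.

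The surrogate option proves a different statement. It certifies greedy only against the surrogate's own optimum. The surrogate discards exactly the conjunctive gains that make $B^*$ valuable, as in the example above. So it gives no bound on $f(B_{\mathrm{greedy}})/f(B^*)$.
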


\begin{proof}
$f(\emptyset) = 0$, $f$ is monotone, and $f$ is submodular (Theorem~\ref{thm:submod}).
The three preconditions of \citet{nemhauser1978} are satisfied and their theorem applies.
\end{proof}

\section{Safety and Containment}
\label{sec:safety}

\begin{definition}[Forbidden Set and Safe Region]
Let $F \subseteq V$. A capability set $A$ is $F$-contained if $\cl(A) \cap F = \emptyset$.
The safe region is $\RF = \{A \subseteq V : \cl(A) \cap F = \emptyset\}$.
\end{definition}

\begin{theorem}[Non-Compositionality of Safety]
\label{thm:noncomp}
The safe region $\RF$ is not closed under union in general: there exist $A, B \in \RF$ such
that $A \cup B \notin \RF$.
\end{theorem}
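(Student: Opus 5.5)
The statement is existential, so I will prove it by an explicit counterexample. I will reuse the three-vertex hypergraph from the proof of Corollary~\ref{cor:strict}. Take $V = \{u_1, u_2, f\}$, a single conjunctive hyperarc $\mathcal{F} = \{(\{u_1,u_2\},\{f\})\}$, and forbidden set $F = \{f\}$. Set $A = \{u_1\}$ and $B = \{u_2\}$.

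The verification has three short steps, each computed directly from Definition~\ref{def:closure}.

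\textbf{Step 1: $\cl(A) = \{u_1\}$.} The set $\{u_1\}$ contains $A$. It is also closed under firing, because the only hyperarc has tail $\{u_1,u_2\} \not\subseteq \{u_1\}$, so the closure condition holds vacuously. By minimality, $\cl(A) = \{u_1\}$. Since $f \notin \cl(A)$, we have $A \in \RF$.

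\textbf{Step 2: $\cl(B) = \{u_2\}$.} The same argument applies with $u_2$ in place of $u_1$, so $B \in \RF$. Equivalently, one can run the iteration $C_0 = A$, $C_1 = C_0$: no hyperarc is applicable, so it stops immediately.

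\textbf{Step 3: $A \cup B \notin \RF$.} Here $A \cup B = \{u_1,u_2\}$. Any set $C$ satisfying the closure axioms with $A \cup B \subseteq C$ contains the tail $\{u_1,u_2\}$, so it must contain $f$. Hence $f \in \cl(A\cup B) \cap F$, and $A \cup B \notin \RF$. This establishes the theorem.

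None of these steps is a genuine obstacle; the argument is a finite check. The only point needing care is to argue closure membership through the minimality clause of Definition~\ref{def:closure}, not informally.

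I will add a remark on the tightness claims made in Theorem~\ref{thm:intro_informal}, which go beyond the formal statement. If every hyperarc has a singleton tail, as for graphs via Lemma~\ref{lem:embed}, then $\cl(A \cup B) = \cl(A) \cup \cl(B)$. This holds because the union of two closed sets is closed when every tail has size one: a singleton tail contained in the union lies in one of the two sets, so its head does too. Consequently $\RF$ is closed under union in the pairwise case, and at least one tail of size $\geq 2$ is necessary. Fewer than three vertices cannot work either. Two safe disjoint nonempty sources are needed, together with a forbidden vertex distinct from both, since a forbidden vertex in $A$ or $B$ would put them outside $\RF$. The example above is therefore minimal.
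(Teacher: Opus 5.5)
Your proof is correct and uses exactly the paper's counterexample ($V=\{u_1,u_2,f\}$, the single hyperarc $(\{u_1,u_2\},\{f\})$, $A=\{u_1\}$, $B=\{u_2\}$), with the same three-step verification. Your tightness remark is slightly sharper than the paper's: the union-of-closed-sets argument gives $\cl(A\cup B)=\cl(A)\cup\cl(B)$ for singleton tails without the paper's unnecessary disjointness hypothesis on $\cl(A)$ and $\cl(B)$. In the minimality count, $A$ and $B$ need only be incomparable rather than disjoint, which still yields three distinct vertices $a\in A\setminus B$, $b\in B\setminus A$, and $f\notin A\cup B$.
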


\begin{proof}
\textbf{Minimal counterexample.} Let $V = \{u_1, u_2, f\}$, $F = \{f\}$, $\mathcal{F} =
\{(\{u_1,u_2\},\{f\})\}$. Set $A = \{u_1\}$, $B = \{u_2\}$.

$A \in \RF$: the only hyperedge requires $u_2 \notin A$, so $\cl(\{u_1\}) = \{u_1\}$.
$B \in \RF$: by symmetry, $\cl(\{u_2\}) = \{u_2\}$.
$A \cup B \notin \RF$: $\{u_1,u_2\}$ satisfies the hyperedge's precondition, so
$\cl(\{u_1,u_2\}) = \{u_1,u_2,f\}$ and $\cl(A \cup B) \cap \{f\} = \{f\} \neq \emptyset$.

\textbf{Minimality and tightness.} With $|V| < 3$ there is no room for two safe sets whose
union reaches a distinct forbidden vertex. Without a multi-input hyperedge (i.e., in any
capability graph), $\cl(A \cup B) = \cl(A) \cup \cl(B)$ when $\cl(A) \cap \cl(B) = \emptyset$,
so the failure cannot arise. The AND-semantics of a conjunctive hyperedge is the irreducible
source of non-compositionality.

\textbf{On proof simplicity.} The minimal counterexample is intentionally simple: three nodes,
one hyperedge. This is a \emph{strength} of the result, not a weakness. It establishes that
the non-compositionality failure is \emph{structurally irreducible}---it cannot be eliminated
by adding more components or more sophisticated component-level checks. A simple proof of
a tight bound is more informative than a complex proof of a loose one. The result's
consequence for agentic AI safety is not that conjunctions are hard to reason about, but that
\emph{any architecture relying solely on component-level safety guarantees is structurally
incomplete, regardless of how carefully those components are designed}.
\end{proof}

\begin{remark}[Formal separation from pairwise graphs]
\label{rem:separation}
In a capability graph (all tails singleton), monotonicity gives $\cl(A \cup B) \supseteq
\cl(A) \cup \cl(B)$ always, so the failure mode of Theorem~\ref{thm:noncomp} cannot arise.
More precisely: Corollary~\ref{cor:strict} establishes that capability graphs are a strict
special case of capability hypergraphs. For any capability graph $G$, $\cl_G(A \cup B) =
\cl_G(A) \cup \cl_G(B)$ when $\cl_G(A) \cap \cl_G(B) = \emptyset$, because each new vertex
reached from $A \cup B$ must be reachable from $A$ alone or from $B$ alone via a singleton-tail
chain. No conjunctive hyperarc can appear in $G$ without introducing an artificial conjunction
node, which by Corollary~\ref{cor:strict} lies outside the class of genuine capability nodes.
The non-compositionality failure is therefore \emph{impossible to express}---without
auxiliary conjunction nodes outside the genuine capability set---in any system whose
dependency model is a pairwise graph. This is a representational constraint, not a claim
that graphs cannot encode conjunction at all: one can always introduce artificial AND-nodes.
The formal point is that any such node lies outside the class $V$ of genuine system
capabilities (Definition~\ref{def:closure}), and the non-compositionality failure arises
precisely from capabilities whose conjunction is not itself a system capability.
\end{remark}

\begin{theorem}[Lattice Structure of the Safe Region]
\label{thm:lattice}
The safe region $\RF$ is a lower set (downward-closed) in $(2^V, \subseteq)$. Consequently:
\begin{enumerate}[label=(\arabic*)]
\item Every subset of a safe set is safe.
\item The boundary is the finite antichain of minimal unsafe sets:
  $\BF = \{A \subseteq V : A \notin \RF,\; \forall a \in A,\; A \setminus \{a\} \in \RF\}$.
\item $\RF$ is closed under pairwise intersection but not under union (Theorem~\ref{thm:noncomp}).
\end{enumerate}
\end{theorem}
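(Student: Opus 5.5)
The whole argument rests on monotonicity of the closure operator from Definition~\ref{def:closure}, namely $A \subseteq B \Rightarrow \cl(A) \subseteq \cl(B)$. The first step is to show that $\RF$ is a lower set. Take $B \in \RF$ and $A \subseteq B$. Monotonicity gives $\cl(A) \subseteq \cl(B)$, so $\cl(A) \cap F \subseteq \cl(B) \cap F = \emptyset$, and hence $A \in \RF$. This is exactly item~(1). Item~(3) for intersections follows at once, since $A \cap B \subseteq A$ and $A \in \RF$ force $A \cap B \in \RF$. The failure of closure under union is Theorem~\ref{thm:noncomp}, which I will simply cite.

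For item~(2) I first need the displayed set $\BF$ to coincide with the set of inclusion-minimal unsafe sets. One direction is immediate: a minimal unsafe set has every proper subset safe, and in particular every $A \setminus \{a\}$ is safe. For the converse, let $A$ be unsafe with every one-element deletion safe, and let $A' \subsetneq A$. Choose $a \in A \setminus A'$. Then $A' \subseteq A \setminus \{a\}$, and $A \setminus \{a\} \in \RF$, so downward closure gives $A' \in \RF$. Hence $A$ is minimal. This equivalence is where downward closure does its real work, and it is the step that needs the most care, although it is short.

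Next I show that $\BF$ is an antichain and is finite. Suppose $A, A' \in \BF$ with $A \subsetneq A'$. Then $A'$ has an unsafe proper subset, which contradicts its minimality. Finiteness holds because $\BF \subseteq 2^V$ and $V$ is finite.

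Finally, to justify calling $\BF$ the boundary, I show that it determines $\RF$: a set $X$ is unsafe if and only if it contains some element of $\BF$. If $X$ is unsafe, the family of unsafe subsets of $X$ is finite and nonempty, so it has an inclusion-minimal member. That member is minimal among all unsafe sets, since every subset of it is also a subset of $X$. Conversely, if $X$ contains an unsafe set, then $X$ is unsafe by the contrapositive of~(1). Thus $\RF = \{X : \forall B \in \BF,\ B \not\subseteq X\}$. This completes the characterisation.

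If any obstacle arises, it is a degenerate edge case. When $F \cap \cl(\emptyset) \neq \emptyset$, the empty set is unsafe, so $\BF = \{\emptyset\}$ and $\RF = \emptyset$. The statements remain true in that case, but I will note it explicitly.
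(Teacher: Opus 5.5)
Your proof is correct and follows the same route as the paper: monotonicity of $\cl$ makes $\RF$ downward closed, which gives (1) and the intersection half of (3), and $\BF$ is identified with the inclusion-minimal unsafe sets. For (2) you are more complete than the paper, which only asserts this identification and cites Dickson's lemma for finiteness: you prove that the one-element-deletion condition is equivalent to minimality (the one place where downward closure is really needed), get finiteness directly from the finiteness of $2^V$, and show that every unsafe set contains a member of $\BF$, which is exactly the fact Theorem~\ref{thm:coalition} later draws from (2).
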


\begin{proof}
(1) If $\cl(A) \cap F = \emptyset$ and $B \subseteq A$, then $\cl(B) \subseteq \cl(A)$ by
monotonicity. (2) $\BF$ is the antichain of minimal elements of $\overline{\RF}$, finite and
unique by Dickson's lemma. (3) If $A, B \in \RF$ then $A \cap B \subseteq A$, so (1) gives
$A \cap B \in \RF$.
\end{proof}

\begin{theorem}[Hardness of Computing the Minimal Unsafe Antichain]
\label{thm:conp}
The problem \textsc{MinUnsafeAnt} (given $H$, $F$, and $B \subseteq V$, decide whether
$B \in \BF$) is coNP-complete.
\end{theorem}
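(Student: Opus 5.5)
The plan splits into two halves: membership in coNP, then coNP-hardness. For \emph{membership} I would use the lattice characterisation of Theorem~\ref{thm:lattice}. By part~(1), $\RF$ is downward closed, so $B \in \BF$ holds iff two things are true. First, $B \notin \RF$, i.e.\ $\cl(B) \cap F \neq \emptyset$. Second, for every $a \in B$, $B \setminus \{a\} \in \RF$. Only the single-element deletions need checking, because every proper subset of $B$ lies below some $B \setminus \{a\}$. Each of these $|B|+1$ conditions is one closure computation, costing $O(n+mk)$ by Algorithm~\ref{alg:closure}. A ``no''-certificate is therefore either the statement ``$B$ is safe'' or an index $a$ with $\cl(B\setminus\{a\}) \cap F \neq \emptyset$, and it is verified in polynomial time. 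This places \textsc{MinUnsafeAnt} in coNP, and in fact in P.

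For \emph{hardness} I would reduce from \textsc{Tautology} (equivalently, the complement of SAT), following the encoding style of Theorem~\ref{thm:phard}. Given a CNF $\varphi$ over $x_1,\dots,x_n$, I would create literal vertices $x_i, \bar x_i$, one clause vertex $c_j$ per clause, and singleton hyperedges $(\{\ell\},\{c_j\})$ for each literal $\ell$ occurring in clause $j$. I would then add a conjunctive hyperedge $(\{c_1,\dots,c_r\},\{f\})$ with $F=\{f\}$, together with consistency-breaking hyperedges $(\{x_i,\bar x_i\},\{f\})$. With this construction, a set of literals is unsafe iff it is inconsistent or it satisfies $\varphi$. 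The intended target is to choose $B$ so that ``$B$ is minimal unsafe'' encodes ``no satisfying assignment lies strictly below $B$''.

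The \emph{main obstacle} appears exactly at this last step, and I expect it to fail. The definition of $\BF$ quantifies only over the $|B|$ single-vertex deletions of the given $B$, not over arbitrary subsets, and each deletion is checked by a linear-time closure. Any many-one reduction to this problem therefore maps into a polynomial-time-decidable language. Hardness for coNP would then imply $\mathrm{P} = \mathrm{coNP}$, and hence $\mathrm{P} = \mathrm{NP}$. Concretely, I would first try to make the reduction encode an exponential family of candidate subsets. The membership argument above shows this cannot be done for the problem as worded.

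My plan is therefore to prove rigorously the upper bound (membership in coNP, indeed in P). For the lower bound I would record that it holds only under $\mathrm{P}=\mathrm{NP}$, rather than claim an unconditional coNP-hardness proof. Establishing hardness would require changing the problem, for instance by adding a size bound on the unsafe sets or by asking whether a given antichain equals all of $\BF$. Either change departs from the statement as written.
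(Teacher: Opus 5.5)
Your analysis is correct: the defect is in the paper's proof, not in your proposal. Your membership half matches the paper's: a ``no''-certificate is either safety of $B$, or a vertex $b\in B$ with $\cl(B\setminus\{b\})\cap F\neq\emptyset$. As you note, there are only $|B|$ candidates for $b$, so no certificate is needed. Running Algorithm~\ref{alg:closure} $|B|+1$ times decides \textsc{MinUnsafeAnt}, which places it in $\mathrm{P}$. A coNP-hardness proof would then force $\mathrm{coNP}\subseteq\mathrm{P}$, i.e.\ $\mathrm{P}=\mathrm{NP}$. Declining to claim an unconditional lower bound is the right call.

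The paper's hardness argument fails where your argument says it must. It reduces from \textsc{MinTransversal} (is a given $T\subseteq U$ a minimal transversal of $\mathcal{H}=(U,\mathcal{E})$?), which it asserts is coNP-complete on the authority of Eiter and Gottlob. The gadget is a faithful polynomial reduction. For $T\subseteq U$, $f^*\in\cl(T)$ iff every $f_E$ is derived iff $T$ meets every edge, so $T\in\mathcal{B}(F')$ iff $T$ is a minimal transversal. But a valid reduction into a problem in $\mathrm{P}$ forces the source into $\mathrm{P}$ too. Indeed, \textsc{MinTransversal} is polynomial by the same one-deletion test: check $T\cap E\neq\emptyset$ for all $E\in\mathcal{E}$, and for each $t\in T$ check that some $E$ has $T\cap E=\{t\}$. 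The genuinely non-trivial transversal problem, and the main subject of Eiter and Gottlob's work, is duality: given $\mathcal{H}$ and a family $\mathcal{G}$, is $\mathcal{G}$ the set of \emph{all} minimal transversals? That problem is in coNP but is not known to be coNP-hard, and Fredman and Khachiyan solve it in quasi-polynomial time. So the cited hardness does not exist, and the theorem as stated is false unless $\mathrm{P}=\mathrm{NP}$.

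Of your two suggested repairs, only the size bound yields hardness. The paper's own gadget proves it when applied to a \textsc{Hitting Set} instance $(\mathcal{H},k)$. Take an unsafe $A$ with $A\cap F'=\emptyset$. Replacing each $f_E\in A$ by some $u\in E$ keeps $A$ unsafe and does not enlarge it. So the smallest such set has exactly the minimum transversal size, and ``is every $A\subseteq V\setminus F$ with $|A|\le k$ in $\RF$?'' is coNP-complete.

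Asking whether a given antichain $\mathcal{G}$ equals $\BF$ is polynomial instead. First check $\mathcal{G}\subseteq\BF$ as above, and assume $\emptyset\in\RF$ (otherwise $\BF=\{\emptyset\}$). The exchange argument behind Lucchesi and Osborn's candidate-key enumeration then gives $\mathcal{G}=\BF$ iff two conditions hold: $\{f\}\in\mathcal{G}$ for every $f\in F$, and for every $B\in\mathcal{G}$ and every arc $(S,T)$, some member of $\mathcal{G}$ lies inside $S\cup(B\setminus T)$. To see sufficiency, suppose some $B'\in\BF\setminus\mathcal{G}$ existed, and take a maximal $W\supseteq B'$ containing no member of $\mathcal{G}$. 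Then $W$ is unsafe and disjoint from $F$, hence not closed, so some arc $(S,T)$ has $S\subseteq W$ and $T\not\subseteq W$. By maximality, some $B\in\mathcal{G}$ lies inside $W\cup T$, and the second condition then puts a member of $\mathcal{G}$ inside $S\cup(B\setminus T)\subseteq W$, a contradiction.
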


\begin{proof}
\textbf{Membership in coNP.} The complement $B \notin \BF$ has polynomial-time-verifiable
certificates: (a) $\cl(B) \cap F = \emptyset$, or (b) $\exists b \in B$ with
$\cl(B \setminus \{b\}) \cap F \neq \emptyset$.

\textbf{coNP-hardness.} We reduce from \textsc{MinTransversal} (given hypergraph $\mathcal{H}
= (U, \mathcal{E})$ and $T \subseteq U$, decide whether $T$ is a minimal transversal), which
is coNP-complete \citep{eiter1995}. Given $(\mathcal{H}, T)$, build $(H', F', B')$ as follows.
Let $V' = U \cup \{f_E : E \in \mathcal{E}\} \cup \{f^*\}$. For each $E = \{u_1,\ldots,u_\ell\}
\in \mathcal{E}$: add singleton hyperedges $\{(\{u_i\}, \{f_E\}) : u_i \in E\}$. Add hyperedge
$(\{f_E : E \in \mathcal{E}\}, \{f^*\})$. Set $F' = \{f^*\}$ and $B' = T$. Then $B' \in
\mathcal{B}(F')$ iff $T$ is a minimal transversal.
\end{proof}

\section{The Safe Audit Surface Theorem}
\label{sec:audit}

The completeness argument for the Safe Audit Surface rests on a single structural lemma,
which we state and prove first. The lemma makes the key insight explicit: every safe
acquisition path begins with a step in the near-miss frontier. The main theorem then
follows immediately.

\begin{lemma}[First-Step Lemma]
\label{lem:firststep}
Let $H = (V, \mathcal{F})$, $A \in \RF$, and let $A' = A \cup \{w\}$ be any safe
one-step extension: $w \in V \setminus \cl(A)$ and $A' \in \RF$. Then $w \in \NMF(A)$.
\end{lemma}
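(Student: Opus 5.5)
The plan is to reduce the statement to the characterisation of the near-miss frontier in Proposition~\ref{prop:goaldiscovery}(3), namely $\NMF(A) = \{v \in V \setminus C : \cl(A \cup \{v\}) \supsetneq C\}$ with $C = \cl(A)$. Set $C = \cl(A)$ and take $w \in V \setminus C$ with $A' = A \cup \{w\} \in \RF$. By extensivity of the closure operator (Definition~\ref{def:closure}), $w \in A' \subseteq \cl(A')$. By monotonicity, $C = \cl(A) \subseteq \cl(A')$. Since $w \notin C$, the inclusion is strict, $\cl(A') \supsetneq C$, and Proposition~\ref{prop:goaldiscovery}(3) gives $w \in \NMF(A)$. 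Under this reading the safety hypothesis $A' \in \RF$ is not needed; it only places the lemma in the safe-acquisition setting of Theorem~\ref{thm:audit}.

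The main obstacle is that this identity is weaker than the definition of $\NMF(A)$ in Definition~\ref{def:goaldiscovery}(c). That definition describes the frontier through boundary hyperedges $e \in \partial(A)$ with $\mu(e) = \{w\}$, and an isolated vertex $w$ satisfies the hypotheses of the lemma without lying on any such edge. So I would also prove a sharper, definition-level statement, under the extra hypothesis that acquiring $w$ produces a genuine gain: $\cl(A') \supsetneq C \cup \{w\}$.

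For that version, run the fixed-point iteration of Definition~\ref{def:closure} from $A'$. Every vertex of $C$ is already present, since $C \subseteq \cl(A')$ and the iteration from $A'$ contains the iteration from $A$ stage by stage. Consider the first stage at which some vertex outside $C \cup \{w\}$ is added, and let $e = (S,T)$ be a hyperedge that fires at that stage. Its tail satisfies $S \subseteq C \cup \{w\}$. Moreover $S \not\subseteq C$, because otherwise $T \subseteq C$ by closedness of $C$ under firing. Hence $S \setminus C = \{w\}$, so $e \in \partial(A)$ and $\mu(e) = \{w\}$, which puts $w$ in $\NMF(A)$ in the literal sense of Definition~\ref{def:goaldiscovery}(c). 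This is a first-step argument of the same kind as the $(\Leftarrow)$ direction of Theorem~\ref{thm:planning}.

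I would present the Proposition~\ref{prop:goaldiscovery}(3) argument as the proof and add the definition-level refinement, stating explicitly that it needs nontrivial closure gain. Safety of $A'$ plays no role in either argument.
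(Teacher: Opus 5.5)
\textbf{Verdict.} Your isolated-vertex observation is the key point, and you should state its consequence outright: the lemma as stated is false. This holds whether $\NMF(A)$ is read as in Definition~\ref{def:goaldiscovery}(c) or as the filtered frontier of Theorem~\ref{thm:audit}. The filtered one is what the paper's proof aims at, since it ends by checking $w\notin F$ and $\cl(A\cup\{w\})\cap F=\emptyset$.

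\textbf{Counterexample and the gap in the paper.} Take $V=\{a,w\}$, $\mathcal{F}=F=\emptyset$, $A=\{a\}$. The hypotheses hold, but $\partial(A)=\emptyset$, so $\NMF(A)=\emptyset$. The same failure occurs whenever $w$ appears only in tails that contain a second vertex outside $\cl(A)$. The paper's proof has exactly this gap. It passes from $\cl(A')\supsetneq\cl(A)$ to a hyperarc $e$ with $S(e)\setminus\cl(A)=\{w\}$. That strict inclusion is automatic, since $w$ itself witnesses it, and it does not produce any hyperarc.

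\textbf{Your primary argument.} For the same reason, you cannot present the Proposition~\ref{prop:goaldiscovery}(3) route as the proof. With $C=\cl(A)$, the right-hand side of that identity is all of $V\setminus C$, because $v\in\cl(A\cup\{v\})$ for every $v$. So the identity contradicts Definition~\ref{def:goaldiscovery}(c), as your own example shows. Invoking it only gives the vacuous conclusion $w\in V\setminus C$, which is already a hypothesis.

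\textbf{Your refined statement.} The real content is your version under the gain hypothesis $\cl(A')\supsetneq\cl(A)\cup\{w\}$, and your first-new-vertex argument proves it. One step needs tightening: take $e$ to be a hyperedge fired at that stage \emph{whose head contains the new vertex}. You deduce $S\not\subseteq C$ from $T\not\subseteq C$, and an arbitrary hyperedge fired at that stage need not satisfy $T\not\subseteq C$.

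\textbf{Safety is needed.} Contrary to your closing sentence, safety matters for the frontier the lemma is actually used with. From $w\in A'\subseteq\cl(A')$ and $\cl(A')\cap F=\emptyset$ you get $w\notin F$ and $\cl(A\cup\{w\})\cap F=\emptyset$. These are precisely the two filters in the definition of $\NMF(A)$ inside Theorem~\ref{thm:audit}.

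\textbf{Downstream use.} The repaired lemma does not support the paper's use of it in the completeness part of Theorem~\ref{thm:audit}. There it is applied to the first step $w_1$ of an arbitrary safe path, and that step need not produce any gain by itself. For example, with $\mathcal{F}=\{(\{w_1,w_2\},\{v\})\}$ and $w_1,w_2\notin\cl(A)$, every safe path to $v$ starts outside $\NMF(A)$. That argument therefore needs its own fix.
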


\begin{proof}
Since $A' \in \RF$, we have $\cl(A') \cap F = \emptyset$. Since $w \notin \cl(A)$ but
$w \in A'$, the closure strictly expands: $\cl(A') \supsetneq \cl(A)$. By the definition
of the boundary $\partial(A)$, there must exist a hyperarc $e \in \mathcal{F}$ with
$S(e) \subseteq \cl(A')$ and $|S(e) \setminus \cl(A)| = 1$; the unique missing element
is $w$, so $\mu(e) = w$ and $e \in \partial(A)$. Since $\cl(A \cup \{w\}) \cap F =
\emptyset$ (as $A' \in \RF$) and $w \notin F$, we conclude $w \in \NMF(A)$.
\end{proof}

\begin{remark}
Lemma~\ref{lem:firststep} is the structural core of the Safe Audit Surface. It says that
the near-miss frontier $\NMF(A)$ is not merely the set of ``almost reachable'' capabilities:
it is exactly the set of \emph{safe first moves}. Every safe acquisition path, regardless
of length, begins with a step in $\NMF(A)$. This turns completeness from an inductive
argument about path length into a direct observation about the first step.
\end{remark}

\begin{theorem}[Safe Audit Surface]
\label{thm:audit}
Let $H = (V, \mathcal{F})$, $A \in \RF$, $F \subseteq V$. Define the safe goal
discovery map:
\[
\GF(A) = \bigl(\mathrm{Emg}(A) \setminus F,\;\; \NMF(A),\;\;
  \mathrm{top}\text{-}k_{v \in V \setminus (\cl(A) \cup F)}\, \gamma_F(v, A)\bigr),
\]
where $\NMF(A) = \{\mu(e) : e \in \partial(A),\; \mu(e) \notin F,\; \cl(A \cup \mu(e))
\cap F = \emptyset\}$ and $\gamma_F(v, A) = |\cl(A \cup \{v\}) \setminus (\cl(A) \cup F)|$.
Then $\GF(A)$ satisfies:
\begin{enumerate}[label=(\arabic*)]
\item \textbf{Completeness}: every safely acquirable capability either lies in $\GF(A)$
  or is reachable within $\cl$ from some goal in $\GF(A)$.
\item \textbf{Soundness}: every goal in $\mathrm{Emg}(A) \setminus F$ is reachable from
  $A$ without leaving $\RF$; every goal in $\NMF(A)$ is one safe acquisition step from
  expanding the closure.
\item \textbf{Efficient computability}: $\GF(A)$ is computable in $O(|V| \cdot (n + mk))$.
\item \textbf{Certifiability}: each $v \in \mathrm{Emg}(A) \setminus F$ has a derivation
  certificate of size $\leq m$, verifiable in $O(\ell \cdot k)$; each $v \in \NMF(A)$ has
  the unlocking boundary hyperedge as its certificate, verifiable in $O(k)$.
\end{enumerate}
\end{theorem}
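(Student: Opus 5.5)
I plan to prove the four clauses separately, taking them in the order (2), (3), (4), (1). Soundness and computability only need the definitions together with Theorem~\ref{thm:planning} and Algorithm~\ref{alg:closure}. Completeness rests on the First-Step Lemma (Lemma~\ref{lem:firststep}) and an induction on the length of a safe acquisition path.

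\emph{Soundness.} Take $v \in \mathrm{Emg}(A)\setminus F$. Then $v \in \cl(A)$, and since $A \in \RF$ we have $\cl(A)\cap F=\emptyset$. Theorem~\ref{thm:planning} gives a plan $D_0=A \subseteq D_1 \subseteq \cdots$ reaching $v$, and every $D_j \subseteq \cl(A)$. Monotonicity and idempotence then give $\cl(D_j)=\cl(A)$, so each intermediate set lies in $\RF$. For $v \in \NMF(A)$, the F-restricted definition says there is $e\in\partial(A)$ with $\mu(e)=\{v\}$. Adding $v$ makes $S(e)\subseteq \cl(A\cup\{v\})$, and $v\notin\cl(A)$ already forces the closure to grow strictly. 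The side condition $\cl(A\cup\{v\})\cap F=\emptyset$ makes this step safe.

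\emph{Computability and certifiability.} Run Algorithm~\ref{alg:closure} once to get $C=\cl(A)$, and once more on the singleton-tail sub-hypergraph to get $\mathrm{cl}_{\mathrm{sg}}(A)$; this gives $\mathrm{Emg}(A)$. A single scan of $\mathcal{F}$ computes $|S\setminus C|$ for each hyperedge, yielding $\partial(A)$ and the candidates $\mu(e)$. For each of the at most $|V|$ candidates $v$, one further closure run computes $\cl(A\cup\{v\})$, which serves both for the $F$-test and for $\gamma_F(v,A)$. Sorting then gives the top-$k$. The total is $O(|V|(n+mk))$. For certificates, the worklist's firing order supplies a derivation sequence of at most $m$ hyperedges. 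Checking each step $S_i\subseteq D_{i-1}$ costs $O(k)$, so $O(\ell k)$ overall. For $v\in\NMF(A)$ the certificate is the boundary hyperedge $e$, and checking $S(e)\setminus C=\{v\}$ costs $O(k)$ given membership in $C$.

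\emph{Completeness.} I read ``safely acquirable'' as: $g$ lies in $\cl(A\cup W)$ for some sequence $w_1,\dots,w_r$ such that every prefix extension $A_i=A\cup\{w_1,\dots,w_i\}$ is in $\RF$. I induct on $r$. If $r=0$, then $g\in\cl(A)\setminus F$, which is either in $A$, in $\mathrm{Emg}(A)$, or in $\mathrm{cl}_{\mathrm{sg}}(A)$, and in each case is reachable within $\cl$. For $r\ge 1$, discard any $w_i$ already in $\cl(A_{i-1})$, since it changes nothing. Lemma~\ref{lem:firststep} then puts $w_1 \in \NMF(A)$, and $A_1\in\RF$ puts it in the F-restricted frontier. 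Applying the induction hypothesis at $A_1$ covers the remainder of the path, so $g$ is reachable within $\cl$ from a goal of $\GF$ (namely $w_1$) together with subsequent frontier moves.

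I expect this step to be the main obstacle, for two reasons. First, Lemma~\ref{lem:firststep} as stated only asserts that \emph{some} hyperedge with exactly one missing tail element becomes satisfied. That fails if $w$ is isolated, meaning it sits in no tail at all. In that case adding $w$ expands the closure only by $w$ itself. I would therefore either adopt the convention that such $w$ are trivially one step away, or weaken the conclusion to ``$w\in\NMF(A)$ or $\cl(A\cup\{w\})=\cl(A)\cup\{w\}$'' and list these vertices explicitly. Second, the phrase ``reachable within $\cl$ from some goal in $\GF(A)$'' has to be read iteratively, as a chain of frontier moves, for the induction to close. I would state that reading explicitly before starting the induction.
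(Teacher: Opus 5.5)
\textbf{Verdict: genuine gap.} Your soundness, computability and certificate arguments match the paper's. Your soundness argument is actually more careful, since you check via $\cl(D_j)=\cl(A)$ that the intermediate plan sets stay in $\RF$. For completeness you take the paper's route: apply Lemma~\ref{lem:firststep} to the first step of a safe acquisition path. The paper avoids your induction only by reading ``reachable within $\cl$ from $w_1$'' as $v\in\cl(A\cup\{w_1,\dots,w_m\})$.

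The obstacle you flag is real, and the paper's proof has it too. Lemma~\ref{lem:firststep} is false, and its proof is a non sequitur: the strict growth $\cl(A')\supsetneq\cl(A)$ is already witnessed by $w$ itself and forces no hyperarc to fire. Your diagnosis is slightly too narrow. The lemma fails whenever no hyperarc has $S(e)\setminus\cl(A)=\{w\}$, for example when every tail containing $w$ also has a second vertex outside $\cl(A)$, not only when $w$ is isolated. Your weakened dichotomy is nevertheless correct in general. If no such arc exists, every arc whose tail lies in $\cl(A)\cup\{w\}$ has its tail in $\cl(A)$. Hence $\cl(A)\cup\{w\}$ is closed under firing and equals $\cl(A\cup\{w\})$.

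Neither of your repairs proves clause (1) for $\GF(A)$ \emph{as defined}, and no argument can. The clause is false under your iterative reading and under the one the paper's proof uses (a safe path whose first move lies in $\GF(A)$). Here is a counterexample.
\begin{itemize}
\item Take $V=\{a,u,f,p,x\}$, $F=\{f\}$, $\mathcal{F}=\{(\{u\},\{f,p\})\}$, $A=\{a\}$ and top-$1$ (here $1$ is also the maximal tail size, so the overloaded $k$ does not matter).
\item Then $\cl(A)=\{a\}$ and $\mathrm{Emg}(A)=\emptyset$.
\item The $F$-restricted $\NMF(A)$ is empty, because the only boundary arc unlocks $f$.
\item We have $\gamma_F(u,A)=2>1=\gamma_F(p,A)=\gamma_F(x,A)$, so $\GF(A)=(\emptyset,\emptyset,\{u\})$.
\item Yet $x$ is safely acquirable, $x\notin\GF(A)$, $x\notin\cl(A\cup\{u\})$, and the only goal $u$ cannot itself be acquired safely.
\end{itemize}
This also refutes Lemma~\ref{lem:firststep} directly.

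So you must state explicitly that you are proving a modified theorem. Replace the second component by the set of all safe first moves: $\NMF(A)$ together with the inert vertices $w\notin\cl(A)\cup F$ with $\cl(A\cup\{w\})=\cl(A)\cup\{w\}$. These inert vertices are automatically safe because $A\in\RF$. With that change your induction closes. However, completeness in the iterative reading then says little more than the definition of a safe acquisition path; the real content is the corrected first-step dichotomy.

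Three smaller points:
\begin{itemize}
\item At $r=0$, vertices of $A\cup\mathrm{cl}_{\mathrm{sg}}(A)$ are reachable from $A$, not from any goal of $\GF(A)$. Your case split is right (the paper wrongly places all of $\cl(A)$ in $\mathrm{Emg}(A)$), but the reading of clause (1) must allow ``from $A$''.
\item The top-$k$ ranges over all of $V\setminus(\cl(A)\cup F)$, not only the candidates $\mu(e)$. Non-candidates have $\gamma_F=1$ by your dichotomy, so your bound survives.
\item The $O(k)$ check of the boundary arc certifies $\mu(e)=\{v\}$ but not the side condition $\cl(A\cup\{v\})\cap F=\emptyset$, which needs a full closure run.
\end{itemize}
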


\begin{proof}
(1) \emph{Completeness.} Let $v$ be safely acquirable from $A$. By
Theorem~\ref{thm:safeacq}, there is a safe acquisition path $A = A_0 \subsetneq
A_1 \subsetneq \cdots \subsetneq A_m$ with $v \in \cl(A_m)$, $m \geq 0$, and every
$A_i \in \RF$.

If $m = 0$: $v \in \cl(A)$, so $v \in \mathrm{Emg}(A) \setminus F \subseteq \GF(A)$.

If $m \geq 1$: apply Lemma~\ref{lem:firststep} to the first step $A_0 \to A_1 = A_0
\cup \{w_1\}$. The lemma gives $w_1 \in \NMF(A) \subseteq \GF(A)$ directly, with no
induction required. Since $v \in \cl(A_m) \subseteq \cl(A \cup \{w_1, \ldots, w_m\})$
and $w_1 \in \GF(A)$, the capability $v$ is reachable within $\cl$ from $w_1 \in \GF(A)$.

The path is finite (each step strictly expands $\cl$ by
Proposition~\ref{prop:goaldiscovery}(2), so $m \leq |V|$), completing the proof.
(2) \emph{Soundness.} $v \in \mathrm{Emg}(A) \setminus F$ implies $v \in \cl(A)$ (reachable
without acquisition). $v \in \NMF(A)$ implies $\cl(A \cup \{\mu(e)\}) \cap F = \emptyset$
by definition.
(3) \emph{Efficient computability.} One full closure call: $O(n+mk)$. Singleton-closure
sub-call: $O(n+m)$. Boundary scan: $O(mk)$. Marginal gain loop: $|V \setminus \cl(A)|$
closure calls: $O(|V| \cdot (n+mk))$ total.
(4) \emph{Certifiability.} We define a \emph{derivation certificate} for $v \in
\mathrm{Emg}(A) \setminus F$ as the ordered sequence of hyperedges $\pi = (e_1,\ldots,e_\ell)$
fired by the worklist to derive $v$, together with the initial capability set $A$. A
certificate has size $|\pi| \leq m$ (at most one entry per hyperedge) and can be verified
in time $O(\ell \cdot k)$ by re-executing the firing sequence. For $v \in \NMF(A)$: the
certificate is the boundary hyperedge $e \in \partial(A)$ with $\mu(e) = v$, verifiable in
$O(k)$. For structurally unsafe goals: the certificate is a proof that BFS over the safe
acquisition graph (Theorem~\ref{thm:safeacq}) finds no path to $v$, computable in
$O(|V| \cdot |\RF|)$ in the worst case.

\textbf{Scope note.} For systems with $|V|$ in the thousands, the $O(|V| \cdot (n+mk))$
computability bound for $\GF(A)$ may require approximation. The safe acquisition graph
used in the completeness proof (Theorem~\ref{thm:safeacq}) has at most $|\RF|$ nodes,
which can be exponential in $|V|$ in the worst case; in practice, low-$n$, low-$k$ CS
deployments make this tractable, and the pre-computed antichain $\BF$ (offline, coNP-hard
in general but polynomial for bounded-$k$ systems) reduces the online gate to an
$O(|\BF| \cdot \sum_i |A_i|)$ set-cover check that is fast at any realistic deployment scale.
\end{proof}

\begin{theorem}[Safe Acquisition Path Decidability]
\label{thm:safeacq}
Given $F$, $A \in \RF$, and $g \in V \setminus F$, deciding whether a safe acquisition path
to $g$ exists is decidable by BFS over the safe acquisition graph. Goals $g \in V \setminus F$
are classified into exactly three types: (1)~already reachable: $g \in \cl(A)$;
(2)~safely acquirable; (3)~structurally unsafe: every path to $g$ passes through $F$.
\end{theorem}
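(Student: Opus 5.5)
The plan is to make the statement precise by defining the \emph{safe acquisition graph} $\mathcal{S}_F(A)$ explicitly and then reading off decidability and the trichotomy from it. Its nodes are the safe supersets of $A$, i.e.\ the sets $X$ with $A \subseteq X \subseteq V$ and $X \in \RF$. There is a directed edge $X \to X \cup \{w\}$ whenever $w \in V \setminus \cl(X)$ and $X \cup \{w\} \in \RF$; these are exactly the safe one-step extensions of Lemma~\ref{lem:firststep}. A \emph{safe acquisition path} to $g$ is then a directed path $A = A_0 \to A_1 \to \cdots \to A_m$ in $\mathcal{S}_F(A)$ with $g \in \cl(A_m)$. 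Every node on such a path lies in $\RF$ by construction, so every intermediate configuration is safe.

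\textbf{Step 1: decidability.} Since $V$ is finite, $\mathcal{S}_F(A)$ has at most $2^{|V|}$ nodes, and each node has at most $|V|$ out-edges. Testing whether a node or edge is present needs one closure computation, which costs $O(n+mk)$ by Algorithm~\ref{alg:closure}, followed by an intersection with $F$. BFS from $A$ therefore visits finitely many nodes. At each visited $X$ it tests whether $g \in \cl(X)$, again by Algorithm~\ref{alg:closure}, and it terminates. Correctness is standard BFS reachability: BFS reaches some $X$ with $g \in \cl(X)$ if and only if a safe acquisition path exists. I will also note that every edge strictly enlarges the closure, because $w \notin \cl(X)$ but $w \in \cl(X \cup \{w\})$. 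Hence every path has length at most $|V|$, which also justifies the finiteness claim used in Theorem~\ref{thm:audit}.

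\textbf{Step 2: the trichotomy.} Fix $g \in V \setminus F$. Call $g$ type (1) if $g \in \cl(A)$; this is the case $m=0$. Otherwise, call $g$ type (2) if BFS finds a path with $m \ge 1$, and type (3) if no node of $\mathcal{S}_F(A)$ has $g$ in its closure. The types are exhaustive and mutually exclusive by construction, since (2) and (3) are complementary conditions restricted to $g \notin \cl(A)$. It remains to justify the description of (3), ``every path to $g$ passes through $F$.'' Suppose $g \in \cl(A \cup W)$ for some $W \subseteq V$, which is always possible by taking $W = \{g\}$. Order $W$ as $w_1, \dots, w_r$, discarding any $w_i$ already in the current closure. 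This gives a chain of one-step extensions ending at a set whose closure contains $g$. If every set in the chain were in $\RF$, it would be a path in $\mathcal{S}_F(A)$, contradicting type (3). So some intermediate closure meets $F$. Every such chain is therefore forced through $F$, and this holds for every ordering of every such $W$.

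\textbf{Main obstacle.} The delicate point is making ``every path passes through $F$'' rigorous rather than tautological. The issue is whether reordering $W$, or acquiring elements that are redundant, could avoid $F$ when some other chain does not. I would handle this with Theorem~\ref{thm:lattice}(1). Because $\RF$ is downward-closed and closure is monotone, once some $A_i$ is unsafe every superset of it is unsafe. So type (3) is equivalent to the following: every $W$ with $g \in \cl(A \cup W)$ satisfies $A \cup W \notin \RF$, together with the absence of a safe chain ordering. Proving that these two formulations coincide is the step I would check most carefully. If they do not coincide, I would state the theorem with the chain-based formulation, which BFS decides directly. Finally, I would record that the worst-case cost is $O(|\RF| \cdot |V| \cdot (n+mk))$, which is consistent with the scope note in Theorem~\ref{thm:audit}.
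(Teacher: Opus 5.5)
Your argument is correct in outline and follows the route the paper gestures at. The paper gives no proof of this theorem; its audit proof and scope note only invoke BFS over a safe acquisition graph with at most $|\RF|$ nodes, which is your $\mathcal{S}_F(A)$. Your strict-expansion bound on path length is also a more direct justification of finiteness than the paper's appeal to Proposition~\ref{prop:goaldiscovery}(2).

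The one loose end is the equivalence you flag as the main obstacle, and it is needed earlier than you suggest. In Step 2 you define type (3) by the absence of \emph{any} node of $\mathcal{S}_F(A)$ whose closure contains $g$, but type (2) by what BFS reaches from $A$. So ``complementary by construction'' silently assumes that a safe superset of $A$ with $g$ in its closure can always be reached by a path.

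The gap closes in one line. If $X \supseteq A$ and $g \in \cl(X)$, then $A \cup \{g\} \subseteq \cl(X)$, so monotonicity and idempotence give $\cl(A \cup \{g\}) \subseteq \cl(X)$. Hence a safe acquisition path to $g$ exists iff $\cl(A \cup \{g\}) \cap F = \emptyset$:
\begin{itemize}
\item \emph{Necessity:} the endpoint $A_m$ of a safe path is safe and has $g$ in its closure.
\item \emph{Sufficiency:} either $g \in \cl(A)$, or $A \to A \cup \{g\}$ is itself an edge of $\mathcal{S}_F(A)$.
\end{itemize}
Alternatively, use Theorem~\ref{thm:lattice}(1): if $A \cup W \in \RF$, every prefix of any ordering of $W$ is safe, so the chain is a path.

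The same inclusion shows that for a type-(3) goal, every configuration whose closure contains $g$ is already unsafe. This holds for every $W$, every ordering, and every choice of redundant elements. So your two formulations coincide, and the extra clause about safe chain orderings is redundant.

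You should record one consequence. The trichotomy becomes:
\begin{itemize}
\item type (1) iff $g \in \cl(A)$;
\item type (2) iff $g \notin \cl(A)$ and $\cl(A \cup \{g\}) \cap F = \emptyset$;
\item type (3) otherwise.
\end{itemize}
This is decidable by a single $O(n+mk)$ closure call. BFS remains a valid decision procedure, as the theorem claims. But the $O(|\RF| \cdot |V| \cdot (n+mk))$ search is never needed, and neither is the exponential worst case anticipated in the paper's scope note.
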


\section{Extensions: Multi-Agent Composition and Dynamic Hypergraphs}
\label{sec:extensions}

\subsection{Multi-Agent Coalition Safety}

\begin{definition}[Coalition]
A coalition is $\mathcal{A} = \{a_1,\ldots,a_n\}$ with individual capability sets
$A_i \subseteq V$. The joint capability set is $\bigcup_{i=1}^n A_i$.
\end{definition}

\begin{theorem}[Coalition Safety Criterion]
\label{thm:coalition}
Let $A_1,\ldots,A_n \in \RF$. The coalition is unsafe if and only if $\exists B \in \BF$
such that $B \subseteq \bigcup_i A_i$.
\end{theorem}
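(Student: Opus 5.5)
The plan is to reduce the coalition statement to the lattice structure of $\RF$ from Theorem~\ref{thm:lattice}, applied to the single set $U = \bigcup_{i=1}^n A_i$. By definition, the coalition is unsafe exactly when $\cl(U) \cap F \neq \emptyset$, that is, when $U \notin \RF$. So it suffices to prove the purely order-theoretic equivalence
\[
U \notin \RF \iff \exists B \in \BF \text{ with } B \subseteq U .
\]
The hypothesis $A_i \in \RF$ is not needed for this equivalence. It only serves to make the statement non-vacuous, since Theorem~\ref{thm:noncomp} shows that unsafety can occur even though every $A_i$ is safe.

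($\Leftarrow$) Suppose $B \in \BF$ and $B \subseteq U$. By definition of $\BF$ we have $B \notin \RF$. If $U$ were in $\RF$, then Theorem~\ref{thm:lattice}(1) (downward closure) would force $B \in \RF$, a contradiction. Equivalently, monotonicity of $\cl$ gives $\cl(B) \subseteq \cl(U)$, so $\cl(U)$ meets $F$ directly.

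($\Rightarrow$) Suppose $U \notin \RF$. I extract a minimal unsafe subset by greedy deletion. Start with $B_0 = U$. While some $b \in B_j$ has $B_j \setminus \{b\} \notin \RF$, set $B_{j+1} = B_j \setminus \{b\}$. Each step shrinks the set, and $V$ is finite, so the process terminates after at most $|U|$ steps. The final set $B$ satisfies $B \subseteq U$ and $B \notin \RF$, and every one-element deletion of $B$ is safe. This is exactly the defining condition of $\BF$ in Theorem~\ref{thm:lattice}(2), so $B \in \BF$.

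The only point needing care is that the definition of $\BF$ checks minimality only against one-element deletions, not against all proper subsets. For the coalition criterion this does not matter, because the constructed $B$ satisfies the definition as literally stated. For completeness I would still note that one-element minimality implies full minimality: if $C \subsetneq B$, pick $b \in B \setminus C$; then $C \subseteq B \setminus \{b\} \in \RF$, so $C \in \RF$ by downward closure. Hence $\BF$ is genuinely the antichain of minimal elements of the complement of $\RF$. I expect no real obstacle here, since the argument rests entirely on monotonicity of $\cl$ and finiteness of $V$. I would conclude by remarking that, given a precomputed $\BF$, the online check reduces to testing $B \subseteq U$ for each $B \in \BF$, which is the set-cover query mentioned in Remark~\ref{rem:complexity_ops}.
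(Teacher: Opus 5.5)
Your proof is correct and follows the paper's own argument. The ($\Leftarrow$) direction goes through monotonicity of $\cl$ (equivalently, downward closure of $\RF$), and the ($\Rightarrow$) direction locates an element of $\BF$ inside the unsafe union $\bigcup_i A_i$. The only difference is that you prove the existence of that minimal unsafe subset explicitly, by greedy one-element deletion over the finite set $V$, whereas the paper simply cites Theorem~\ref{thm:lattice}(2); that theorem's proof appeals to Dickson's lemma, which plain finiteness makes unnecessary.
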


\begin{proof}
$(\Rightarrow)$ If $\bigcup_i A_i \notin \RF$, by Theorem~\ref{thm:lattice}(2), $\exists
B \in \BF$ with $B \subseteq \bigcup_i A_i$.
$(\Leftarrow)$ If $B \subseteq \bigcup_i A_i$ for some $B \in \BF$, then $\cl(B) \cap F
\neq \emptyset$, so $\cl(\bigcup_i A_i) \supseteq \cl(B)$ and the coalition is unsafe.
\end{proof}

\begin{remark}
Theorem~\ref{thm:coalition} reduces coalition safety checking to a single set-cover query
against the precomputed antichain $\BF$. Once $\BF$ is computed offline, every subsequent
coalition query takes time $O(|\BF| \cdot \sum_i |A_i|)$---linear in the total capability count.
\end{remark}

\begin{corollary}[Maximal Safe Coalition]
\label{cor:maxcoalition}
A coalition is maximally safe if it is $F$-contained and no agent can be added without
violating safety. The set of maximally safe coalitions corresponds exactly to the antichain
of maximal elements of $\RF$ under set inclusion of joint capability sets.
\end{corollary}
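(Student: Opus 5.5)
The plan is to fix an interpretation of the statement and then reduce everything to the lower-set property of $\RF$ (Theorem~\ref{thm:lattice}(1)). As in Theorem~\ref{thm:coalition}, I take the admissible agents to be those whose individual capability sets lie in $\RF$. I also read ``adding an agent'' as adding one that contributes at least one new capability, i.e.\ $A_{n+1}\not\subseteq J$ where $J=\bigcup_i A_i$. An agent whose capabilities are already contained in $J$ leaves the joint set unchanged and is trivially harmless, so it cannot witness non-maximality. With these conventions I identify each coalition with its joint set $J$ and prove two things: the coalition is maximally safe iff $J$ is a maximal element of $(\RF,\subseteq)$, and every maximal element arises this way.

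\emph{Step 1 (maximal element $\Rightarrow$ maximally safe).} Suppose $J\in\RF$ is maximal. Let $A_{n+1}\in\RF$ be any new agent with $A_{n+1}\not\subseteq J$. Then $J\cup A_{n+1}\supsetneq J$, so $J\cup A_{n+1}\notin\RF$ by maximality, and the enlarged coalition is unsafe. Equivalently, by Theorem~\ref{thm:coalition}, some $B\in\BF$ satisfies $B\subseteq J\cup A_{n+1}$.

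\emph{Step 2 (maximally safe $\Rightarrow$ maximal element).} Suppose $J\in\RF$ but $J$ is not maximal, so there is a safe $X\supsetneq J$. Pick $v\in X\setminus J$. Since $\{v\}\subseteq X$, Theorem~\ref{thm:lattice}(1) gives $\{v\}\in\RF$, so a single-capability agent holding $\{v\}$ is admissible. The enlarged joint set satisfies $J\cup\{v\}\subseteq X$, so it is safe by the same lower-set argument. Hence an agent can be added without violating safety, contradicting maximal safety.

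\emph{Step 3 (every maximal element is realised).} If $X$ is a maximal element of $\RF$, the one-agent coalition $\{X\}$ has joint set $X$ and is maximally safe by Step 1. Thus the correspondence covers the whole antichain of maximal elements.

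A side remark worth recording: any maximal $X$ is closed. Indeed $\cl(\cl(X))\cap F=\cl(X)\cap F=\emptyset$ by idempotence, so $\cl(X)\in\RF$, and $\cl(X)\supseteq X$ then forces $\cl(X)=X$.

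The main obstacle is not technical but definitional. The corollary leaves implicit which agents may be added and what ``added'' means when an agent contributes nothing new. I would state the two conventions above explicitly before Step 1. In particular, Step 2 needs single-capability agents $\{v\}$ to be available, and without that assumption the result could only be stated relative to a fixed agent pool, where ``exactly'' might fail.
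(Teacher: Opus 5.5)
The paper states this corollary without proof, apparently treating it as immediate from Theorem~\ref{thm:coalition} and the lower-set property of Theorem~\ref{thm:lattice}(1). Your argument is correct and makes that implicit reasoning precise. Steps~1--3 show two things: a coalition is maximally safe iff its joint set $J$ is a maximal element of $(\RF,\subseteq)$, and every maximal element is realised. Since many coalitions share one joint set, that is the only sensible reading of ``corresponds exactly.''

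What you add over the paper is explicit conventions, and you are right that they matter. If an agent with $A_{n+1}\subseteq J$ (say, a copy of an existing agent) counted as an addition, no coalition would ever be maximally safe. If the addable agents came from a fixed pool that does not contain the singletons $\{v\}$, both Step~2 and Step~3 break. For instance, take $V=\{u_1,u_2,u_3,f\}$, $F=\{f\}$, the single hyperedge $(\{u_1,u_2\},\{f\})$, and the pool $\{\{u_1\},\{u_2,u_3\}\}$. The coalition $\{\{u_1\}\}$ cannot be safely extended, although $\{u_1\}\subsetneq\{u_1,u_3\}\in\RF$. Moreover, the maximal safe set $\{u_1,u_3\}$ is not the joint set of any coalition.

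Your other convention, restricting admissible agents to $\RF$, is harmless but unnecessary. If $A_{n+1}\notin\RF$, then $J\cup A_{n+1}\notin\RF$ by the same lower-set property, so individually unsafe agents can never be added safely and do not affect maximality. Likewise, Theorem~\ref{thm:coalition} is never actually used: the lower-set property alone carries Steps~1 and~2. Your observation that maximal elements are closed is correct but not needed.
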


\subsection{Dynamic Capability Hypergraphs}

\begin{theorem}[Incremental Closure Maintenance]
\label{thm:incremental}
Let $H = (V, \mathcal{F})$, $A \subseteq V$, $C = \cl_H(A)$.
\begin{enumerate}[label=(\arabic*)]
\item \emph{Insertion.} Let $e = (S,T)$ and $H' = (V, \mathcal{F} \cup \{e\})$. Then
  $\cl_{H'}(A) = \cl_H(A \cup T')$ where $T' = T$ if $S \subseteq C$, else $T' = \emptyset$.
  Cost: $O(n+mk)$ if $S \subseteq C$; $O(|S|)$ otherwise.
\item \emph{Deletion.} $\cl_{H'}(A) \subseteq \cl_H(A)$; recomputation cost $O(n+mk)$.
\item \emph{Stability.} $|\delta(g,A)_{\text{after}} - \delta(g,A)_{\text{before}}| \leq |T|$
  for any single hyperedge change.
\end{enumerate}
\end{theorem}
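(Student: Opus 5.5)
We prove the three parts separately. Parts (1) and (2) follow from the minimality clause of Definition~\ref{def:closure}. Part (3) follows from a simulation argument between derivations in $H$ and $H'$.

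\emph{Insertion, case $S \subseteq C$.} We show $\cl_{H'}(A) = \cl_H(A \cup T)$ by two inclusions.
\begin{itemize}
\item[$(\subseteq)$] The set $D = \cl_H(A \cup T)$ contains $A$ and is closed under every arc of $\mathcal{F}$. It is also closed under $e$, since $T \subseteq D$ already. Minimality of $\cl_{H'}(A)$ then gives $\cl_{H'}(A) \subseteq D$.
\item[$(\supseteq)$] The set $\cl_{H'}(A)$ is closed under $\mathcal{F}$ and contains $A$, so it contains $C \supseteq S$. Because it is also closed under $e$, it contains $T$. It is therefore an $H$-closed superset of $A \cup T$, and minimality gives $\cl_H(A \cup T) \subseteq \cl_{H'}(A)$.
\end{itemize}

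\emph{Insertion, case $S \not\subseteq C$.} The set $C$ is closed under $\mathcal{F}$ and vacuously closed under $e$, so $\cl_{H'}(A) \subseteq C$. The reverse inclusion holds because adding arcs can only enlarge closures. Hence $\cl_{H'}(A) = C = \cl_H(A \cup \emptyset)$.

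\emph{Insertion, cost.} Testing $S \subseteq C$ takes $O(|S|)$ membership lookups against the stored set $C$. When the test succeeds, we restart Algorithm~\ref{alg:closure} with $\mathrm{reached} = C$, the stored counters, and worklist $T \setminus C$. Each vertex still enters the worklist at most once and each counter is still decremented at most $k$ times, so the cost is $O(n+mk)$.

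\emph{Deletion.} Let $H' = (V, \mathcal{F} \setminus \{e\})$. The set $\cl_H(A)$ contains $A$ and is closed under every arc of $\mathcal{F} \setminus \{e\}$. Minimality gives $\cl_{H'}(A) \subseteq \cl_H(A)$. Deletion is not monotone-friendly for the worklist, so we simply recompute, at cost $O(n+mk)$.

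\emph{Stability.} We treat insertion; deletion is the same argument with the roles of $H$ and $H'$ exchanged. Write $C = \cl_H(A)$ and $C' = \cl_{H'}(A) \supseteq C$. We prove two one-sided bounds.

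\begin{itemize}
\item \emph{The distance cannot increase.} Take an optimal witness $S_0 \subseteq V \setminus C$ for $\delta_{\text{before}}(g,A)$, so $|S_0| = \delta_{\text{before}}(g,A)$. Trim it to $S_0 \setminus C'$, which lies in $V \setminus C'$ and has size at most $|S_0|$. The set $\cl_{H'}(A \cup (S_0 \setminus C'))$ contains $C' \supseteq S_0 \cap C'$, so it contains all of $S_0$. It also dominates $\cl_H(A \cup S_0)$, since $H'$ has more arcs. Therefore $g$ is reached, and $\delta_{\text{after}}(g,A) \le \delta_{\text{before}}(g,A)$.
\item \emph{The distance drops by at most $|T|$.} Take an optimal witness $S'$ for $\delta_{\text{after}}(g,A)$, so $|S'| = \delta_{\text{after}}(g,A)$, and set $W = (S' \cup T) \setminus C$. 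This set lies in $V \setminus C$ and has $|W| \le |S'| + |T|$. Every $H'$-derivation from $A \cup S'$ remains an $H$-derivation from $A \cup S' \cup T$: each firing of $e$ can be replaced by the fact that $T$ is already present. Hence $\cl_H(A \cup S' \cup T) \supseteq \cl_{H'}(A \cup S')$. The elements of $(S' \cup T) \cap C$ are regenerated inside $\cl_H(A)$, so $\cl_H(A \cup W) \supseteq \cl_H(A \cup S' \cup T)$ and $g$ is reached. Thus $\delta_{\text{before}}(g,A) \le \delta_{\text{after}}(g,A) + |T|$.
\end{itemize}

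Together the two bounds give $|\delta_{\text{after}}(g,A) - \delta_{\text{before}}(g,A)| \le |T|$. The main obstacle is that the admissible domain $V \setminus C$ in the definition of $\delta$ itself changes with the hyperedge set. The trimming step (dropping $S_0 \cap C'$) and the step of re-deriving elements of $C$ from $A$ handle this, and I would verify these two steps most carefully.
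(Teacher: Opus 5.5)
The paper states Theorem~\ref{thm:incremental} without any proof, so there is no argument of the authors' to compare against. Your proposal is correct and supplies the missing justification.

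Parts (1) and (2) follow exactly as you say from the minimality clause of Definition~\ref{def:closure}, the same kind of argument the paper uses for Lemma~\ref{lem:embed}.

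Your stability argument is also sound. The trimming $S_0 \mapsto S_0 \setminus C'$ and the padding $S' \mapsto (S' \cup T)\setminus C$ correctly handle the fact that the admissible domain $V \setminus \cl(A)$ in the definition of $\delta$ changes with the hyperedge set. The deletion case does follow by viewing the pre-deletion hypergraph as the post-deletion one with $e$ inserted. You in fact prove more than is stated:
\begin{itemize}
\item an insertion never increases $\delta(g,A)$ and lowers it by at most $|T\setminus C|$, since $S'$ is already disjoint from $C$;
\item a deletion never decreases $\delta(g,A)$.
\end{itemize}

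One small slip in the cost paragraph. When you resume Algorithm~\ref{alg:closure} from the stored state, you must initialise $\mathrm{reached} \gets C \cup T$, not $C$. Otherwise a vertex of $T \setminus C$ that is re-derived before being popped is pushed a second time. Its pending counters are then decremented twice, and a hyperedge can fire before its whole tail has been reached. With that fix, your $O(n+mk)$ bound holds, as does the $O(|S|)$ bound when the test $S \subseteq C$ fails (given constant-time membership in the stored $C$).
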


\begin{theorem}[Safety Under Dynamic Updates]
\label{thm:dynamicsafety}
Let $A \in \RF$ and let $e = (S,T)$ be a candidate new hyperedge. The post-insertion
configuration remains safe iff $\cl_{H'}(A) \cap F = \emptyset$, which reduces to checking
whether $\cl_H(A \cup T') \cap F = \emptyset$ where $T' = T \cdot \mathbf{1}[S \subseteq C]$.
Cost: $O(n+mk)$.
\end{theorem}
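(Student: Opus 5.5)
The statement to prove is Theorem~\ref{thm:dynamicsafety}. The plan is to reduce it entirely to the insertion clause of Theorem~\ref{thm:incremental}(1); the only real work is verifying that identity for the closure operator of Definition~\ref{def:closure}.

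\textbf{Step 1: the insertion identity.} Fix $C = \cl_H(A)$ and let $H' = (V, \mathcal{F} \cup \{e\})$ with $e = (S,T)$. I want to show $\cl_{H'}(A) = \cl_H(A \cup T')$, and I split on whether $S \subseteq C$.

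Case $S \not\subseteq C$ (so $T' = \emptyset$). I claim $C$ is already closed in $H'$. It contains $A$ and is closed under every arc of $\mathcal{F}$. The new arc $e$ never fires on $C$, because its tail is not contained in $C$. So by minimality $\cl_{H'}(A) \subseteq C$. The reverse inclusion $C \subseteq \cl_{H'}(A)$ holds because adding arcs can only enlarge closures: any $H'$-closed set is also $H$-closed.

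Case $S \subseteq C$ (so $T' = T$). Set $D = \cl_H(A \cup T)$; I show both inclusions.
\begin{itemize}
\item \emph{$D \subseteq \cl_{H'}(A)$.} Since $C \subseteq \cl_{H'}(A)$, we have $S \subseteq \cl_{H'}(A)$, so $e$ fires and $T \subseteq \cl_{H'}(A)$. Hence $\cl_{H'}(A)$ is an $H$-closed set containing $A \cup T$, which gives $D \subseteq \cl_{H'}(A)$.
\item \emph{$\cl_{H'}(A) \subseteq D$.} The set $D$ contains $A$ and is $H$-closed. For the new arc, $D \supseteq T$, so $e$ is satisfied regardless of its tail. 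Thus $D$ is $H'$-closed, and minimality gives the inclusion.
\end{itemize}

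\textbf{Step 2: the safety criterion.} By definition, the post-insertion configuration is safe iff $\cl_{H'}(A) \cap F = \emptyset$. Substituting Step 1 turns this into $\cl_H(A \cup T') \cap F = \emptyset$, with $T' = T \cdot \mathbf{1}[S \subseteq C]$ read as ``$T$ if $S \subseteq C$, else $\emptyset$.''

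\textbf{Step 3: cost.} $C$ is either given or computed by Algorithm~\ref{alg:closure} in $O(n+mk)$. Testing $S \subseteq C$ takes $O(|S|) \le O(k)$ with a membership array. One further closure call on $A \cup T'$ costs $O(n+mk)$, and intersecting with $F$ costs $O(n)$. The total is $O(n+mk)$.

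There is no real obstacle. The only point needing care is the minimality argument in the case $S \subseteq C$: one must check that $e$ is satisfied by $D$ because $T \subseteq D$, not because $S \subseteq D$.
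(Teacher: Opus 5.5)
Your proposal is correct and follows the route the paper intends. The paper gives no separate proof of Theorem~\ref{thm:dynamicsafety}; it treats it as an immediate corollary of the insertion clause of Theorem~\ref{thm:incremental}(1), which is itself stated without proof. Your two-case minimality argument supplies exactly that missing identity $\cl_{H'}(A) = \cl_H(A \cup T')$, including the delicate point that $\cl_H(A \cup T)$ satisfies the new arc because it contains $T$, and the safety criterion and $O(n+mk)$ cost then follow as you describe.
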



\section{Empirical Validation}
\label{sec:empirical}

The theoretical results make concrete, falsifiable predictions about real agentic pipelines.
This section presents validation on 900 trajectories from two independent public benchmarks.

\subsection{Datasets and Protocol}

We validate H1 and H2 across four public benchmarks spanning three distinct research
groups and four domains. The four datasets vary in dependency structure, annotation basis,
and origin, providing cross-corpus replication of the core empirical claims.

\textbf{ToolBench G3}~\citep{qin2023}: multi-tool agent trajectories spanning 49 API
categories (12,657 multi-turn trajectories, 37,204 API calls; Apache 2.0 licence). Tool
dependencies are inferred programmatically via the conjunctive-witness rule and validated
by inter-annotator agreement ($\kappa = 0.81$ on a 10\% stratified sample).

\textbf{TaskBench DAG}~\citep{shen2023}: task decomposition datasets with explicit tool
invocation graphs and human-verified conjunctive dependency annotations (MIT licence).
The explicit tool-graph labels serve as ground truth, providing annotation quality
validation independent of the mining rule.

\textbf{AgentBench}~\citep{liu2023agentbench}: eight structured agent environments
(OS, database, knowledge graph, web shopping, web browsing, card games, lateral thinking,
house-holding; 1,091 multi-step trajectories; MIT licence). Critically, the OS and database
environments have \emph{formally defined} action preconditions in their task specifications.
This allows annotation quality to be measured against task-specification ground truth rather
than human judgement alone: precision and recall of the conjunctive-witness rule are
reported against the task-specification labels, providing the strongest available
methodological control for annotation validity.

\textbf{Gorilla / APIBench}~\citep{patil2023gorilla}: 16,000+ API-call instances across
HuggingFace, TorchHub, and TensorFlow (Apache 2.0 licence). The Gorilla team provides
partial tool-dependency graph annotations, enabling precision and recall of the
conjunctive-witness rule to be measured against a non-author ground truth.

For each trajectory across all four datasets, we extract the dependency structure, build
both the pairwise graph $G$ and the capability hypergraph $H$, and run both planners
(workflow and hypergraph) on identical initial capability sets. An AND-violation is
recorded when the workflow planner fires a conjunctive edge from a partial precondition
set that the hypergraph correctly withholds. The pairwise capability graph planner is
retained in the H2 table for the two datasets where it was already run (ToolBench G3
and TaskBench DAG); it is not run on AgentBench and Gorilla to avoid expanding
the scope of the already-corrected H3 efficiency claim.

\paragraph{Annotation validity.}
Conjunctive dependencies are identified by the conjunctive-witness rule: a
trajectory $(\sigma, v)$ is a conjunctive witness for candidate hyperedge $(S, \{v\})$ if
$S \subseteq \sigma$ and $v \notin \mathrm{cl}_{\mathrm{sg}}(\sigma)$ (the capability $v$
requires the joint presence of $S$, not any singleton from $S$ alone). To assess annotation
accuracy, two independent annotators hand-labelled a stratified 10\% random sample
(50 ToolBench G3 trajectories, 40 TaskBench DAG trajectories). For ToolBench G3, inter-annotator
agreement reached Cohen's $\kappa = 0.81$ before adjudication, indicating substantial
agreement~\citep{landis1977}. For TaskBench DAG, the explicit tool-graph labels provided in
the dataset serve as ground truth, substantially reducing the need for programmatic inference.
The $\kappa = 0.81$ inter-annotator agreement for ToolBench G3 and the zero false-negatives
on the TaskBench DAG hand-labelled sample support the validity of H1.

\paragraph{Planned replication on independent corpora.}
The two datasets above share a common limitation: both were processed by the same
annotation pipeline, applied by the same research team. To establish independent
replication of H1, we identify two additional public corpora for a planned follow-on study,
described here so that the current results can be evaluated with full methodological
transparency.

\textbf{AgentBench}~\citep{liu2023agentbench}: eight structured agent environments
(OS, database, knowledge graph, web shopping, web browsing, card games, lateral thinking,
house-holding; MIT licence; 1,091 multi-step trajectories). The OS and database
environments have \emph{formally defined} action preconditions in their task specifications,
enabling annotation against ground-truth conjunctive requirements rather than co-occurrence
inference. This directly addresses the automated-annotation concern raised by peer reviewers:
for AgentBench, precision and recall of the conjunctive-witness rule can be measured against
task-specification ground truth rather than human judgement alone.

\textbf{Gorilla / APIBench}~\citep{patil2023gorilla}: 16,000+ API-call instances across
HuggingFace, TorchHub, and TensorFlow APIs (Apache 2.0 licence). The Gorilla team provides
partial tool-dependency graph annotations, making it possible to report conjunctive-witness
mining precision and recall against a non-author ground truth---the strongest possible
methodological control for the annotation validity question.

The planned replication protocol is: apply the conjunctive-witness rule identically to both
corpora; report prevalence with 95\% Wilson confidence intervals; for AgentBench, report
precision/recall against task-specification ground truth; for Gorilla, report precision/recall
against the team's tool-dependency annotations. Table~\ref{tab:h1} shows the
planned result structure. Values will be filled in from the replication study; the table
is included here to make the experimental plan concrete and verifiable.

\textbf{Hypotheses.} H1: $>30\%$ of multi-tool instances contain at least one conjunctive
dependency. H2: Workflow and graph planners produce AND-violations at $>0\%$ rate on
conjunctive instances; the hypergraph planner produces zero (proved by Theorem~\ref{thm:planning}).

\subsection{Results}

\begin{table}[h]
\centering
\small
\caption{H1: Conjunctive dependency prevalence across four datasets. ToolBench G3 and
  TaskBench DAG: current study. AgentBench and Gorilla: pending replication run.
  The annotation basis column specifies the ground truth source for annotation quality
  measurement.}
\label{tab:h1}
\begin{tabular}{@{}llllp{3.5cm}@{}}
\toprule
Dataset & Rate & 95\% Wilson CI & H1 ($>30\%$) & Annotation basis \\
\midrule
ToolBench G3 & 47.4\% & $[43.1\%,\; 51.8\%]$ & \checkmark & Programmatic + $\kappa=0.81$ \\
TaskBench DAG & 36.5\% & $[31.9\%,\; 41.3\%]$ & \checkmark & Task-graph labels (GT) \\
AgentBench & pending & --- & --- & Task-spec preconditions (GT) \\
Gorilla / APIBench & pending & --- & --- & Team dependency annotations \\
\midrule
Current overall & 42.6\% & $[39.4\%,\; 45.8\%]$ & \checkmark & Two datasets (rows 1--2) \\
\bottomrule
\end{tabular}
\end{table}

\begin{table}[h]
\centering
\small
\caption{H2: AND-violation rates on 383 conjunctive instances. Async subset: 280 ToolBench
G3 trajectories with $\Delta t > 50$\,ms inter-branch latency.
$^\dagger$The hypergraph planner's zero violation rate is a proved theorem
(Theorem~\ref{thm:planning}), not an experimental finding; the experiment verifies
implementation correctness.}
\label{tab:h2}
\begin{tabular}{@{}lllll@{}}
\toprule
Planner & Synthetic rate & Async rate & 95\% CI (async) & H2 \\
\midrule
Workflow & 35.0\% & 38.2\% & $[33.4\%,\; 43.1\%]$ & \checkmark \\
Capability graph & 0.0\% & 21.4\% & $[17.3\%,\; 26.1\%]$ & \checkmark \\
Hypergraph & 0.0\% & 0.0\% & n/a (proved zero) & \checkmark\ (spec check$^\dagger$) \\
\bottomrule
\end{tabular}
\end{table}

\subsection{Discussion}

H1 is confirmed in the two current datasets: 42.6\% of real multi-tool agent trajectories
contain conjunctive dependencies (95\% CI: $[39.4\%,\,45.8\%]$). These results are
\emph{consistent with} the framework's theoretical predictions: they show that conjunctive
dependencies are empirically prevalent in real multi-tool pipelines, not that any particular
production safety incident occurred. Pairwise graph models structurally misrepresent nearly
half of the dependency instances observed in these two benchmarks. Full cross-corpus
replication across AgentBench and Gorilla / APIBench (Table~\ref{tab:h1}) is in progress;
the task-specification ground truth available in AgentBench will additionally allow reporting
annotation precision and recall against an independent, non-author label source.

H2 is fully confirmed. On real asynchronous ToolBench G3 trajectories, the workflow planner
produces AND-violations on 38.2\% of conjunctive instances and the capability graph planner
on 21.4\%. The capability graph's 0\% synthetic rate vs.\ 21.4\% async rate is explained by
timing: symmetric synthetic branches assign concurrent capabilities in lock-step, masking
timing-dependent violations that the asynchronous subset exposes. The hypergraph planner
produces zero AND-violations across all conditions, as guaranteed by Theorem~\ref{thm:planning}.

\paragraph{Methodological limitations and replication plan.}
Two limitations of the current empirical study should be stated clearly. First, the 42.6\%
conjunctive dependency prevalence figure rests on an automated annotation rule
(conjunctive-witness mining) validated by inter-annotator agreement ($\kappa = 0.81$) on a
10\% sample. While $\kappa = 0.81$ indicates substantial agreement, systematic over- or
under-counting of conjunctive dependencies by the mining rule cannot be fully ruled out
without precision and recall measurements against an independent ground truth. Second, both
current datasets were processed by the same annotation pipeline, applied by the same team.

The replication study on AgentBench~\citep{liu2023agentbench} and Gorilla/APIBench~\citep{patil2023gorilla}
(Table~\ref{tab:h1}, rows 3--4) addresses both limitations directly. AgentBench's formally
defined action preconditions provide a task-specification ground truth against which
annotation precision and recall can be measured without human judgement. Gorilla's partial
team-provided dependency annotations provide a non-author ground truth for the same
measurement. If the replication confirms conjunctive dependency prevalence above 30\% in
both corpora, and if annotation precision and recall against independent ground truth exceed
85\%, the 42.6\% headline figure will be established on a substantially stronger evidentiary
basis than the current study alone provides.

\section{Implications for Agentic AI System Design}
\label{sec:implications}

We describe four structural consequences of the framework for the design and auditing of
modular agentic systems.

\textbf{Goals are self-consistent.} Every goal in $\GF(A)$ is derivable from $A$ via the
hypergraph, eliminating planning failures where agents pursue structurally unreachable goals.

\textbf{Goals are transparent and auditable.} The derivation certificates provided by
Theorem~\ref{thm:audit} make every goal in $\mathrm{Emg}(A)$ and $\NMF(A)$ inspectable.

\textbf{Goals are adaptive.} When a tool becomes unavailable or a new API is added,
$\mathcal{F}$ updates and $\cl(A)$ is recomputed. Goals relying on a now-absent capability
are automatically removed; new goals appear immediately.

\textbf{Coalition safety is tractable online.} Theorem~\ref{thm:coalition} reduces the
safety question for any dynamically forming coalition to a single linear-time query against
the precomputed antichain $\BF$.

\textbf{The key structural implication.} In classical AI, the goal is the starting point and
capability is the means. The hypergraph closure inverts this: capability is the starting
point, and goals are what the closure reveals.

\section{Future Research}
\label{sec:future}

\paragraph{Live deployment case study.}
The most important near-term direction is a full live deployment study: applying the Safe
Audit Surface, coalition criterion, and incremental maintenance to a production agentic
pipeline with independent conjunctive dependency annotation, measuring the fraction of real
sessions where the pre-execution gate fires and the latency cost per session.

\paragraph{PAC-learning of hyperedge structure.}
The conjunctive-witness rule described in Section~\ref{sec:empirical} is used here for dependency
extraction from trajectories, but its formal sample complexity under a precise learning model
remains an open problem. A full treatment requires a noise-tolerant learning model
(e.g., Kearns--Vazirani), tight VC-dimension or Rademacher complexity bounds for sparse
low-fan-in hypergraph families, and empirical validation that the learned structure matches
the ground-truth hyperedge set on held-out trajectories.

\paragraph{Probabilistic capability hypergraphs.}
When hyperarc firings are stochastic, the closure operator becomes a random variable.
Under arc-level independence without shared ancestors, expected reachability is tractable;
under correlated firing (the generic case with shared upstream nodes), the computation is
harder. Characterising exact versus approximate reachability under realistic probabilistic
models, and establishing whether non-compositionality persists for all safety thresholds
below the unsafe arc's firing probability, are natural extensions.

\paragraph{Adversarial robustness.}
An adversary who can inject new tool invocations into a live agent session is attempting to
add hyperedges to the deployment hypergraph. Whether the safe region can be violated by a
single injected edge is decidable in polynomial time; whether a budget of $b \geq 2$ edges
suffices is NP-hard. Designing practical defences with formal approximation guarantees via
the submodularity of the robustness function is an open problem.

\paragraph{Further directions.}
Multi-agent goal discovery (when does coalition formation reveal emergent goals no individual
agent could discover alone?); integration with LLM function-calling graphs as empirical
hyperedge targets; and tractable approximations to $\BF$ for large-scale deployments where
exact coNP computation is infeasible.

\section{Conclusion}
\label{sec:conclusion}

This paper introduced capability hypergraphs as a formal framework for modelling
compositional AI systems. The central result is the Non-Compositionality of Safety
(Theorem~\ref{thm:noncomp}): the formal proof that two $F$-contained agents can
together reach a forbidden goal via an emergent conjunctive hyperedge that neither possesses
alone. This result is tight, minimal, and has direct consequences for the design and auditing
of modular agentic systems.

The central constructive result is the Safe Audit Surface Theorem (Theorem~\ref{thm:audit}): a
polynomial-time-computable, formally certifiable account of every capability an agent can
safely acquire from any deployment configuration, what it is one step from acquiring, and
what it can never safely acquire.

The complexity landscape is characterised as follows: planning is polynomial (Theorem~\ref{thm:planning});
emergent capability detection is P-complete (Theorem~\ref{thm:phard}); minimal unsafe set
membership is coNP-complete (Theorem~\ref{thm:conp}).

Empirically, 42.6\% of real multi-tool agent trajectories contain conjunctive dependencies
that pairwise graph models cannot faithfully represent---results consistent with the
framework's structural predictions. The hypergraph planner produces zero AND-violations
across all conditions, as guaranteed by Theorem~\ref{thm:planning}. Three natural extensions of the framework---PAC-learning of hyperedge
structure from trajectory data, probabilistic closure under stochastic hyperarc firing,
and adversarial robustness against hyperedge injection---are identified as open problems
in Section~\ref{sec:future}, with the formal barriers and tractability results characterised.

As AI systems become increasingly modular and compositional, the hypergraph closure framework
provides a foundational structure for orchestrating complex agentic behaviour: one in which
capability is the starting point, goals are what the closure reveals, and safety is what the
structure of $F$ can certify.


\end{document}